\documentclass[lettersize,journal]{IEEEtran}
\usepackage{algorithmic}
\usepackage[nocomma, short]{optidef} 
\usepackage{amsmath,amsfonts,amsthm}
\usepackage{algorithm}
\usepackage{array}
\usepackage[caption=false,font=normalsize,labelfont=sf,textfont=sf]{subfig}
\usepackage{textcomp}
\usepackage{stfloats}
\usepackage{url}
\usepackage{verbatim}
\usepackage{graphicx}
\usepackage{cite}
\usepackage{multicol}
\usepackage{booktabs} 
\usepackage{pifont}
\usepackage[dvipsnames]{xcolor}
\usepackage[linecolor=Maroon,backgroundcolor=Maroon!25,bordercolor=Maroon]{todonotes}
\usepackage{framed}

\newtheorem{definition}{Definition}[section] 
\newtheorem{Assumption}{Assumption}[section]

\newtheorem{Proposition}{Proposition}[section]
\newtheorem{Requirement}{Requirement}[section]

\colorlet{shadecolor}{Maroon!25}

\newcommand\copyrighttext{%
	\footnotesize \textcopyright 2024 IEEE. Personal use of this material is permitted. Permission from IEEE must be obtained for all other uses, in any current or future media, including reprinting/republishing this material for advertising or promotional purposes, creating new collective works, for resale or redistribution to servers or lists, or reuse of any copyrighted component of this work in other works.}
\newcommand\copyrightnotice{%
	\begin{tikzpicture}[remember picture,overlay]
		\node[anchor=south,yshift=5pt] at (current page.south) {\fbox{\parbox{\dimexpr\textwidth-\fboxsep-\fboxrule\relax}{\copyrighttext}}};
	\end{tikzpicture}%
}

\hyphenation{op-tical net-works semi-conduc-tor IEEE-Xplore}

\usepackage{acronym}
\newacro{QP}{quadratic program}
\newacro{MIQP}{mixed-integer quadratic program}
\newacro{MILP}{mixed-integer linear program}
\newacro{MIP-DM}{mixed-integer programming-based vehicle decision maker}
\newacro{MINLP}{mixed-integer nonlinear program}
\newacro{MIMP}{mixed-integer motion planner}
\newacro{MIOCP}{mixed-integer optimal control problems}
\newacro{MI}{mixed-integer}
\newacro{MIP}{mixed-integer programming}
\newacro{NLP}{nonlinear program}
\newacro{NMPC}{nonlinear model predictive controller}
\newacro{EDS}{permutation equivariant deep set}
\newacro{REDS}{recurrent permutation equivariant deep set}
\newacro{LSTM}{long short-term memory}
\newacro{NN}{neural network}
\newacro{OCP}{optimal control problem}
\newacro{FF}{feed forward network}
\newacro{DNN}{deep neural network}
\newacro{FCF}{Frenet coordinate frame}
\newacro{CCF}{Cartesian coordinate frame}
\newacro{FP}{feasibility projector}
\newacro{SQP}{sequential quadratic programming}
\newacro{RL}{reinforcement learning}
\newacro{COCO}{\emph{combinatorial offline convex online}}
\newacro{RNN}{recurrent neural network}
\newacro{BB}{branch-and-bound}
\newacro{EDS}{equivariant deep set}
\newacro{AD}{automated driving}
\newacro{DM}{decision making}
\newacro{PCC}{Pearson Correlation Coefficient}
\newacro{TD}{training distribution}
\newacro{SD}{simulation distribution}

\newcommand{\mpc}{reference tracking \ac{NMPC}}
\newcommand{\expert}{expert \ac{MIP-DM}}
\newcommand{\safetyfilter}{{\ac{FP}}}
\newcommand{\nnmodule}{\ac{REDS} planner}

\newcommand{\slackqp}{soft-\ac{QP}}

\newcommand{\nndslstm}{{\ac{REDS} network}}

\newcommand{\rdnetCol}{\texttt{DEU\_Cologne-63\_5\_I-1}}



\newcommand{\optim}[1]{#1^*}
\newcommand{\estim}[1]{\hat{#1}}
\newcommand{\refer}[1]{\tilde #1}
\newcommand{\ub}[1]{\overline{#1}}
\newcommand{\lb}[1]{\underline{#1}}

\newcommand{\xPm}{x}
\newcommand{\xPmIni}{\estim{x}}
\newcommand{\xPmRef}{\refer{x}}

\newcommand{\XPmRef}{\refer{X}}
\newcommand{\UPmRef}{\refer{U}}

\newcommand{\XPmOpt}{X^*}

\newcommand{\uPm}{u}
\newcommand{\uPmRef}{\refer{u}}

\newcommand{\XPm}{X}

\newcommand{\UPm}{U}

\newcommand{\xExp}{x^{\mathrm{e}}}

\newcommand{\xCor}{x^\mathrm{s}}

\newcommand{\XExp}{X^{\mathrm{e}}}
\newcommand{\XNN}[1]{X^\mathrm{p #1}}

\newcommand{\UNN}[1]{U^\mathrm{p #1}}

\newcommand{\stateLon}{s}
\newcommand{\stateLat}{n}

\newcommand{\td}{t_\mathrm{d}}

\newcommand{\dimT}{T_\mathrm{f}}
\newcommand{\dimObs}{n_\mathrm{obs}}
\newcommand{\dimHor}{N}
\newcommand{\dimInp}{n_{u}}

\newcommand{\dimPm}{n_\mathrm{x}}

\newcommand{\dimContr}{n_{u}}

\newcommand{\dimFeatEq}{m_\mathrm{eq}}
\newcommand{\dimFeatUs}{m_\mathrm{us}}
\newcommand{\dimHidden}{m_\mathrm{h}}
\newcommand{\dimEnsemble}{n_\mathrm{e}}

\newcommand{\laneWidth}{d_\mathrm{lane}}



\newcommand{\binL}{\lambda}
\newcommand{\binY}{\gamma}

\newcommand{\Bin}{B}
\newcommand{\BinPred}{\hat{B}}

\newcommand{\BinL}{\Lambda}
\newcommand{\BinY}{\Gamma}

\newcommand{\setCar}{\mathcal{O}^\mathrm{car}}
\newcommand{\setSafe}{\mathcal{O}^\mathrm{safe}}
\newcommand{\setOut}{\mathcal{O}^\mathrm{out}}
\newcommand{\setFreeOut}{\mathcal{F}^\mathrm{out}}

\newcommand{\setFreeSafe}{\mathcal{F}^\mathrm{safe}}

\newcommand{\selMat}{P}
\newcommand{\posFrenet}{p}
\newcommand{\eqFeat}{\zeta^\mathrm{eq}}
\newcommand{\usFeat}{\zeta^\mathrm{us}}

\newcommand{\hEq}{h^\mathrm{eq}}
\newcommand{\hUs}{h^\mathrm{us}}


\newcommand{\feas}{\mu}
\newcommand{\tcomp}{t_\mathrm{comp}}
\newcommand{\costRatio}{\rho}

\newcommand{\weightLc}{w_\mathrm{lc}}

\newcommand{\weightRight}{w_\mathrm{rght}}
\newcommand{\weightDist}{w_\mathrm{dst}}
\newcommand{\weightHard}{w_\mathrm{h}}

\newcommand{\R}{\mathbb{R}}

\newcommand{\norm}[1]{\left\lVert #1 \right\rVert}

\newcommand{\ra}[1]{\renewcommand{\arraystretch}{#1}} 

\begin{document}

\title{Equivariant Deep Learning of Mixed-Integer Optimal Control Solutions for Vehicle Decision Making and Motion Planning}

\author{Rudolf Reiter,
	 Rien Quirynen,
	 Moritz Diehl,
	 Stefano Di Cairano,~\IEEEmembership{Senior Member,~IEEE}

\thanks{R. Quirynen and S. Di Cairano are with 
Mitsubishi Electric Research Laboratories, Cambridge, MA, USA (e-mail: \{quirynen,dicairano\}@merl.com).}
\thanks{R. Reiter and M. Diehl are with 
the Unversity of Freiburg, 79110 Freiburg i. B., Germany (e-mail: \{rudolf.reiter,moritz.diehl\}@imtek.uni-freiburg.com).}
\thanks{Rudolf Reiter and Moritz Diehl were supported by EU via ELO-X 953348, by DFG via Research Unit FOR 2401, project 424107692 and 525018088 and by BMWK via 03EI4057A and 03EN3054B.}
}



\maketitle
\copyrightnotice

\begin{abstract}
Mixed-integer quadratic programs~(MIQPs) are a versatile way of formulating vehicle decision making and motion planning problems, where the prediction model is a hybrid dynamical system that involves both discrete and continuous decision variables.
However, even the most advanced MIQP solvers can hardly account for the challenging requirements of automotive embedded platforms. Thus, we use machine learning to simplify and hence speed up optimization.
Our work builds on recent ideas for solving MIQPs in real-time by training a neural network to predict the optimal values of integer variables and solving the remaining problem by online quadratic programming.
Specifically, we propose a recurrent permutation equivariant deep set that is particularly suited for imitating MIQPs that involve many obstacles, which is often the major source of computational burden in motion planning problems.
Our framework comprises also a feasibility projector that corrects infeasible predictions of integer variables and considerably increases the likelihood of computing a collision-free trajectory.
We evaluate the performance, safety and real-time feasibility of decision-making for autonomous driving using the proposed approach on realistic multi-lane traffic scenarios with interactive agents in SUMO simulations.
\end{abstract}

\begin{IEEEkeywords}
motion planning, mixed-integer optimization, geometric deep learning.
\end{IEEEkeywords}

\acresetall
\section{Introduction}
\IEEEPARstart{D}{ecision}-making and motion planning for automated driving is challenging due to several reasons~\cite{Paden2016}.
First, in general, even formulations of deterministic, two-dimensional motion planning problems are PSPACE-hard~\cite{Reif1979, LaValle2006}.
Second, (semi-)autonomous vehicles operate in highly dynamic environments, thus requiring a relatively high control update rate.
Finally, there is always uncertainty that stems from model mismatch, inaccurate measurements as well as other drivers' unknown intentions.
The complexity of motion planning and \ac{DM} for automated driving and its real-time requirements in resource-limited automotive platforms~\cite{DiCairano2018tutorial} requires the implementation of a multi-layer guidance and control architecture~\cite{Paden2016, Guanetti2018}.

Based on a route given by a navigation system, a decision-making module decides what maneuvers to perform, such as lane changing, stopping, waiting, and intersection crossing. Given the outcome of such decisions, a motion planning system generates a trajectory to execute the maneuvers, and a vehicle control system computes the input signals to track it. Recent work \cite{Quirynen2023} presented a \ac{MIP-DM}, which simultaneously performs maneuver selection and trajectory generation by solving a \ac{MIQP} at each time instant. In this paper, we present an algorithm to implement \ac{MIP-DM} based on supervised learning and \ac{SQP}, to compute a collision-free and close-to-optimal solution
with a considerably reduced online computation time compared to advanced \ac{MIQP} solvers.

The presented approach consists of an \emph{offline} supervised learning procedure and an \emph{online} evaluation step that includes a \safetyfilter{}. In the \emph{offline} procedure, expert data is collected by computing the exact solutions of the \ac{MIQP} for a large number of samples from a distribution of parameter values in the \ac{MIP-DM}, including for example states of the autonomous vehicle, its surrounding traffic environment, and speed limits.
Along the paradigm of~\cite{Abhishek2022a, Bertsimas2021}, a \ac{NN} is trained with the collected expert data to predict the binary variables that occur within the \ac{MIQP}, which are the main source of the computational complexity of \acp{MIQP}. 
A novel \ac{NN} architecture, referred to as \ac{REDS}, is proposed that exploits key structural domain properties, such as permutation equivariance related to obstacles and recurrence of the time series.
\IEEEpubidadjcol

In the \emph{online} evaluation step, the \ac{NN} predicts the optimal values of the binary variables in the \ac{MIQP}.
After fixing these, the resulting problem becomes a convex \ac{QP} that can be solved efficiently.
To account for potentially wrong predictions, the \ac{QP} is formulated with slack variables (\slackqp{}).
The \slackqp{} solution is forwarded to a \safetyfilter{} to correct any infeasibilities, implemented by a \ac{NLP} with smooth, but concave obstacle constraints. Such convex-concave \acp{NLP} can be solved efficiently using an \ac{SQP} algorithm~\cite{Acados2021,Tran2012}.
To further increase the likelihood of finding a feasible and possibly optimal solution, an ensemble of \acp{NN} is trained and evaluated, and the ``best'' solution is selected at each time step.

The overall performance of the proposed method is compared against the advanced \ac{MIQP} solver \texttt{gurobi}~\cite{Gurobi2023}, alternative neural network architectures~\cite{Abhishek2022} and evaluated in high fidelity closed-loop simulations using \texttt{SUMO}~\cite{Sumo2018} and \texttt{CommonRoad}~\cite{Althoff2017}.

\subsection{Related Work}
\label{sec:related_work}
This work lies at the intersection of three research areas, i.e., geometric deep learning, \ac{MIP} and motion planning for \ac{AD} (cf., Fig.~\ref{fig:related}).
\begin{figure}
	\begin{center}
		\begin{small}
		\def\svgwidth{.32\textwidth}
		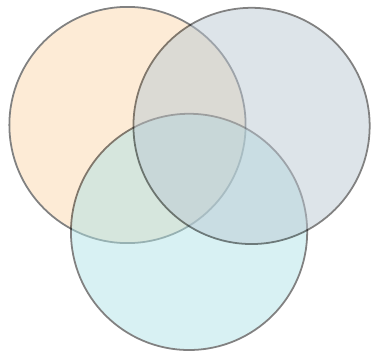
		\end{small}
		\caption{Categorical overview of related work. The research domain of the proposed approach is located at the intersection of three areas.}
		\label{fig:related}
	\end{center}
\end{figure}
Motion planning problems are solved by algorithms that can handle combinatorial complexity and dynamic feasibility under real-time computation limits~\cite{Claussmann2020, Reda2024}.

Motion primitives are appealing due to their simplicity and alignment with the road geometry~\cite{Sheckells2017}. However, the motion plans are usually sub-optimal or conservative.

In several works, graph search is performed on a discretized state space~\cite{Paden2016}.
Probabilistic road maps~\cite{Kavraki1996} and rapidly exploring random trees~\cite{Oktay2017} are common in highway motion planning. Nonetheless, they suffer from the curse of dimensionality, a poor connectivity graph, and no repeatability~\cite{Claussmann2020}.
By using heuristics, $\mathrm{A}^*$~\cite{Ajanovic2018} aims to avoid the problem of high-dimensional discretization. However, choosing an appropriate heuristic is challenging and graph generation in each iteration is time-consuming~\cite{Paden2016}. 

For some conservative simplifications related to highway driving, the state space can be decomposed into spatio-temporal driving corridors~\cite{Bender2015,Miller2018,Li2022}.
Such non-convex regions can be further decomposed into convex cells~\cite{Deolasee23} or used in a sampling-based planner~\cite{Li2022}. For these cases, the performance is limited due to the required over-approximations.

Derivative-based numerical optimization methods can successfully solve problems in high-dimensional state spaces in real-time~\cite{Diehl2005}. However, these approaches are often restricted to convex problem structures or sufficiently good initial guesses.
While highway motion planning and \ac{DM} is highly non-convex, by introducing integer variables, the problem can be formulated as an \ac{MIQP}~\cite{Junghee2015,Xiangjun2016,Esterle2020,Kessler2020,Eiras2022,Li2022,Kessler2023,Quirynen2023}.

A common problem is the significant computation time of \ac{MIQP} solvers.
Authors mitigate the problem by either scaling down the problem size~\cite{Xiangjun2016, Reiter2021,Eiras2022,Quirynen2023}, neglecting real-time requirements~\cite{Esterle2020} or accepting the high computation time for non-real-time simulations while leaving real-time feasibility open for future work~\cite{Junghee2015,Kessler2020,Kessler2023}.

Several recent works use deep learning to accelerate \ac{MIQP} solutions. 
One strategy is to improve algorithmic components within an online \ac{MIQP} solver~\cite{Masti2019, Nair2021, Khalil2022}.
The authors in~\cite{Masti2019} use deep learning to warm-start \ac{MIQP} solvers, which however cannot lower the worst-case computation time.
The authors in~\cite{Nair2021,Khalil2022} propose a learning strategy to guide a \ac{BB} algorithm.
For solving \acp{MILP}, the authors in~\cite{Russo2023} use \acp{NN} for a custom solver to achieve similar computation times as commercial solvers, which, however, may still be large.
Another strategy is based on supervised or imitation learning, using \ac{MIQP} solutions as expert data to train function approximators offline~\cite{Bertsimas2022,Chenyang2020,Srinivasan2021}.
The authors in~\cite{Bertsimas2022} show how the classification of binary variables in \acp{MIQP} can produce high-quality solutions with a low computation time. 
Recent work~\cite{Abhishek2022} extends supervised learning for \ac{MIQP}-based motion planning~\cite{Abhishek2022a} using a \ac{RNN} and presolve techniques to increase the likelihood of predicting a feasible solution.
Due to the \ac{RNN}, the results in~\cite{Abhishek2022} scale well with the horizon length, but not yet with the number of obstacles, as shown later.

None of the works that use deep learning to accelerate \ac{MIQP}-based motion planning consider or leverage geometric deep learning, particularly, permutation equivariance or invariance, to decrease the network size and increase the performance. As one consequence, they do not allow variable input and variable output dimensions, e.g., corresponding to a variable number of obstacles.
However, some recent works successfully use geometric deep learning for related tasks in \ac{AD}, e.g., the prediction of other road participants~\cite{Zhou2021, Eunsan2021}, or within \ac{RL}~\cite{Huegle2019}. 

Other techniques for \ac{DM} exclusively use \acp{NN}, without solving optimization problems online, e.g., using deep \ac{RL}~\cite{Szilard2022}. These approaches usually require more data, are less interpretable and may lack safety without additional safety layers, due to the approximate nature of the \ac{NN} output~\cite{Claussmann2020}.

\subsection{Contributions}
\label{sec:contribution}
Our main contribution is a \ac{REDS} for the \ac{NN} predicting integer variables of \acp{MIQP}, which is particularly suited for learning time-series and obstacle-related binary variables in motion planning problems.
In particular, the \ac{REDS} enables the \ac{NN} to predict binary variables for collision avoidance concerning a varying number of obstacles, and the predicted solution is the same regardless of the order in which the obstacles are provided.
Our proposed framework includes an ensemble of \acp{NN} in combination with a feasibility projector (\ac{FP}) that increases the likelihood of computing a collision-free trajectory.
Compared to the state-of-the-art, we show that our method improves the prediction accuracy, adds permutation equivariance, allows for variable number of obstacles and horizon length, and generalizes well to unseen data, such as several obstacles not present in the training data.
As a final contribution, we demonstrate the performance of a novel integrated planning system, which is further referred to as \nnmodule{}. The \nnmodule{} uses an ensemble of \acp{REDS}, a selection of the best \slackqp{} solution and the \ac{FP} for real-time vehicle decision-making and motion planning. We present closed-loop simulation results with reference tracking by a \ac{NMPC} for realistic traffic scenarios, using interactive agents in \texttt{SUMO}~\cite{Sumo2018}, and a high-fidelity vehicle model and the problem setup provided by~\texttt{CommonRoad}~\cite{Althoff2017}.\\

\subsection{Preliminaries and Notation}
The notation $I(n)=\{z \in \mathbb{N} | 0 \leq z \leq n\}$ is used for index sets and $\mathbb{B}=\{0,1\}$.
Throughout this paper, the attributes of \emph{equivariance (invariance)} exclusively refer to \emph{permutation equivariance (invariance)} with the following definition.
\begin{definition}
	Let $f(\usFeat):\mathbb{X}^M\rightarrow\mathbb{Y}$ be a function on a set of variables $\usFeat=\{\usFeat_1,\ldots,\usFeat_M\}\in \mathbb{X}^M$ and let $\mathcal{G}$ be the permutation group on $\{1,\ldots,M\}$. The function $f$ is \textbf{permutation invariant}, if $f(g \cdot \usFeat) = f(\usFeat)$ for all $g \in \mathcal{G}, \usFeat \in \mathbb{X}^M$. \label{def:invariant}
\end{definition}
\begin{definition}
	Let $f(\eqFeat):\mathbb{X}^N\rightarrow\mathbb{Y}^N$ be a function on a set of variables $\eqFeat=\{\eqFeat_1,\ldots,\eqFeat_N\}\in \mathbb{X}^N$ and let $\mathcal{G}$ be the permutation group on $\{1,\ldots,N\}$. The function $f$ is \textbf{permutation equivariant}, if $f(g \cdot \eqFeat) = g \cdot f(\eqFeat)$ for all $g \in \mathcal{G}, \eqFeat \in \mathbb{X}^N$. \label{def:equivariant}
\end{definition}
Features that are modeled without structural symmetries are referred to as \emph{unstructured features}.
Lower and upper bounds on decision variables $x$ are denoted by $\lb{x}$ and $\ub{x}$, respectively.
The all-one vector~$[1,\ldots,1]^\top$ of size $n$ is $\boldsymbol{1}^\top_n$.
The notation $f(x;y)$ in the context of optimization problems indicates that function $f(\cdot)$ depends on decision variables $x$ and parameters $y$.
Lower case letters $x\in\R^n$ refer to scalars or vectors of size $n$ and their upper case version $X\in\R^{n\times N}$ refer to the matrix associated with a sequence of those vectors along a time horizon $N$.
\emph{Obstacles} normally refer to surrounding vehicles.

\section{Problem Setup and Formulation}
\label{sec:minlp}
In this work, an autonomous vehicle shall drive \emph{safely} along a multi-lane road, while obeying the traffic rules.
We consider decision-making based on \ac{MIP} that determines the driving action and a reference trajectory for the vehicle control to follow.
The definition of \emph{safety} can be ambiguous~\cite{Brunke2022}. We use the term \emph{safe} to refer to satisfying hard collision avoidance constraints concerning known obstacles' trajectories.
Our problem setup relies on the following assumptions.
\begin{Assumption}
	There exists a prediction time window along which the following are known:
	\begin{enumerate}
		\item the predicted position and orientation for each of the obstacles in a neighborhood of the ego vehicle up to a sufficient precision,
		\item the high-definition map information, including center lines, road curvature, lane widths, and speed limits.
	\end{enumerate}
	\label{as:problem}
\end{Assumption}
Assumption~\ref{as:problem} requires the vehicle to be equipped with on-board sensors and perception systems~\cite{Brummelen2018}, an obstacle prediction module~\cite{Paden2016,Guanetti2018}, the high definition map database, either on-board or obtained through vehicle-to-infrastructure~(V2I) communication~\cite{ersal2020}.
The effect of prediction inaccuracies and uncertainty on vehicle safety is outside the scope of the present paper, but relevant work can be found in~\cite{Paden2016,Guanetti2018} and references therein.
\begin{Assumption}
	We assume a localization with cm-level accuracy at each sampling time.
	\label{as:gps}
\end{Assumption}
Assumption~\ref{as:gps} is possible thanks to recent advances in GNSS that achieves cm-level accuracy at a limited cost~\cite{Berntorp2020,Greiff2023}.

We propose a \ac{DM} that satisfies the following requirements.
\begin{Requirement}
	\label{req:coll}
	The \ac{DM} must plan a sequence of maneuvers, possibly including one or multiple lane changes, and a corresponding collision-free trajectory to make progress along the vehicle's future route with a desired nominal velocity.
\end{Requirement} 
\begin{Requirement}
	\label{req:feas}
	The \ac{DM} yields kinematically feasible trajectories, satisfying vehicle limitations and traffic rules (e.g., speed limits).
\end{Requirement} 
\begin{Requirement}
	\label{req:eq}
	The \ac{DM} must be agnostic to permutations of surrounding vehicles, i.e., the plan must be consistent, irrespective of the order the vehicles are processed (cf. Def.~\ref{def:equivariant}).
\end{Requirement} 
\begin{Requirement}
	\label{req:comp_t}
	The \ac{DM} must hold a worst-case computation time lower than the real-time planning period $t_\mathrm{p}$, with a target value of $t_\mathrm{p}\leq0.2\mathrm{s}$.
\end{Requirement}
\begin{Requirement}
	\label{req:var}
	At any time, the \ac{DM} must satisfy all requirements, regardless of the number of surrounding vehicles.
\end{Requirement}
Based on Assumption~\ref{as:problem}, 
Req.~\ref{req:coll}-\ref{req:eq} can be met by the \ac{MIP-DM} from~\cite{Quirynen2023}.
However, the crucial Req.~\ref{req:comp_t}-\ref{req:var} may be difficult to meet by the \ac{MIP-DM}, when executing on embedded control hardware with limited computational resources and memory~\cite{DiCairano2018tutorial}.
The main focus of this work is approximating the \ac{MIP-DM} with a suitable framework that satisfies all Req.~\ref{req:coll}-\ref{req:var}.
Notably, Req.~\ref{req:eq} is trivially true for the \ac{MIP-DM}~\cite{Quirynen2023}.
However, this is not the case for \ac{NN}-based planners unless the architecture is invariant to permutations~\cite{Zaheer2017}.

\subsection{Nominal and Learning-based Architecture}
\label{sec:architecture}
A hierarchical control architecture (cf., Fig~\ref{fig:sw_arch}) of a \ac{DM} is proposed, followed by a \mpc{}. 
Within the architecture of Fig.~\ref{fig:sw_arch}, the \expert{} is used as a benchmark.
We refer to the module aiming to imitate the \expert{} as the \nnmodule{}.
It comprises an ensemble of \acp{NN} that predict the binary variables of the \ac{MIQP}, as used in the \expert{}.
For each \ac{NN}, a \slackqp{} is solved with the binary variables fixed to the predictions and softened obstacle avoidance constraints.
This yields a set of candidate trajectories, 
and a selector evaluates their costs and picks the least sub-optimal one.
Finally, the \safetyfilter{} projects the solution guess to the feasible set to ensure that the constraints are satisfied.
The \safetyfilter{} solves an \ac{NLP}, minimizing the error with respect to the best candidate solution and subject to ellipsoidal collision avoidance constraints.
The reference provided by the \nnmodule{} or the \expert{} is tracked by the \mpc{} at a high sampling rate which includes collision avoidance constraints. This adds additional safety and leverages the requirements on the \nnmodule{} on conservative constraints related to uncertain predictions.\\

The individual modules differ in motion prediction models, their objective, obstacle avoidance constraints, the approximate computation time~$\tcomp$, the final horizon~$\dimT$ and the discretization time~$\td$, cf., Tab.~\ref{table:overview_opti}.

\begin{figure}
	\begin{center}
		\fontsize{7pt}{7pt}
		\def\svgwidth{.48\textwidth}
		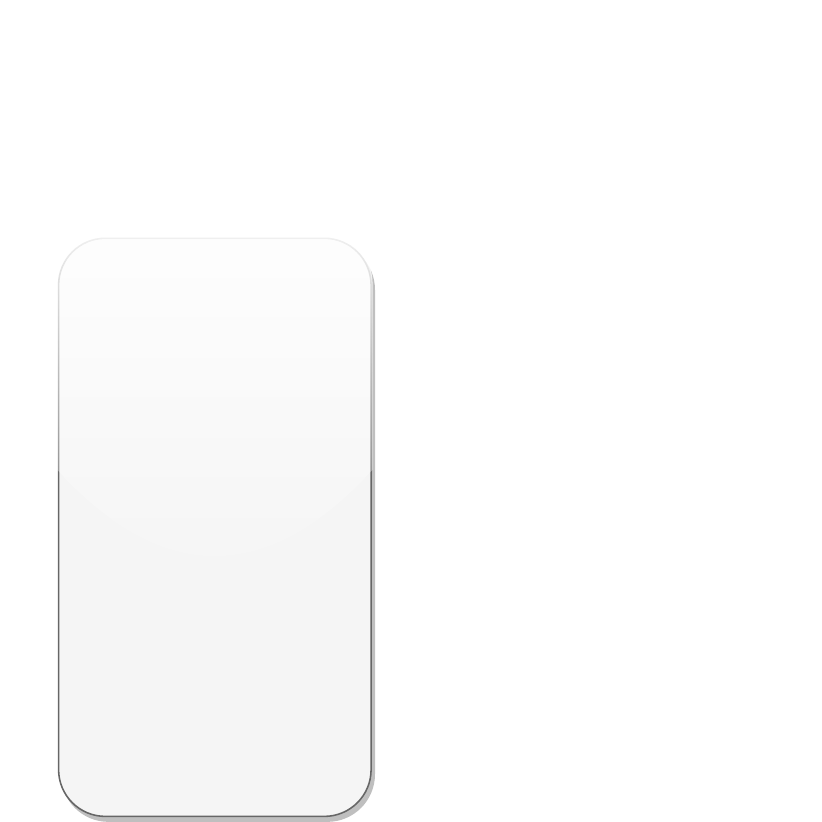
		\caption{Planning and closed-loop simulation architecture. The \expert{} is imitated by the \nnmodule{}, which uses an ensemble of $\dimEnsemble$ \acp{NN} to predict values of the binary variables~$\{\BinPred^{1},\ldots,\BinPred^{\dimEnsemble}\}$. A \slackqp{} solves a formulation of the \expert{} with binary variables fixed to the prediction. The lowest cost solution~$\XNN{}$ is chosen by a Selector and corrected by the \safetyfilter{}. 
  		A \mpc{} with obstacle avoidance tracks the corrected solution $X^\mathrm{s}$.}
		\label{fig:sw_arch}
	\end{center}
\end{figure}
\begin{table*}
	\centering
	\ra{1.2}
	\begin{tabular}{@{}lllllllll@{}}
		\addlinespace
		\toprule
		Module & Opt. Class & Model & Objective & Obstacle Avoidance &$\tcomp$&$\dimT$ &$\td$&Comment\\
		\midrule
		\expert & MIQP & point-mass & global economic &  hyper-planes $\setOut$& $\sim2$s &$10$s&$0.2$s& high computation time\\
		\slackqp & QP & point-mass & global economic & fixed hyper-planes $\setOut$& $\sim3$ms &$10$s&$0.2$s& possibly unsafe\\
		\safetyfilter & NLP & point-mass & tracking (\slackqp{}) & ellipsoids $\setSafe$ & $\sim10$ms&$10$s&$0.2$s& safe projection\\
		tracking \ac{NMPC} & NLP & kinematic & tracking (\safetyfilter{}) & ellipsoids $\setSafe$& $\sim2$ms &$1$s &$0.05$s&\\
		\bottomrule
		\vspace{0.5px}
	\end{tabular}
	\caption{Overview of optimization problems solved within the individual modules.}
	\label{table:overview_opti}
\end{table*}

\section{Expert Motion Planner}
\label{sec:mimp}
We model the vehicle state in a curvilinear coordinate frame, which is defined by the curvature $\kappa(\stateLon)$ along the reference path~\cite{Reiter2021a}.
The state vector~$\xPm=[\stateLon, \stateLat, v_\stateLon, v_\stateLat]^\top\in \R^{\dimPm}$ includes the position~$\posFrenet=[\stateLon,\stateLat]^\top\in \R^2$, with the longitudinal position $\stateLon$, the lateral position~$\stateLat$, the longitudinal velocity~$v_{\stateLon}$ and the lateral velocity $v_{\stateLat}$, where $\dimPm=4$.
The control vector $\uPm=[a_\mathrm{s},a_\mathrm{n}]^\top\in \R^{\dimContr}$ comprises the longitudinal and lateral acceleration in the curvilinear coordinate frame, where $\dimContr=2$.
The discrete-time double integrator dynamics are written as $\xPm_{i+1}=A\xPm_{i}+B\uPm_{i}$, using the discretization time~$t_\mathrm{d}$, where $A\in\R^{\dimPm\times\dimPm}$ and $B\in\R^{\dimPm\times\dimContr}$.
By considering $\dimHor$ prediction steps, the prediction horizon can be computed by $\dimT=\dimHor t_\mathrm{d}$.
Constraints $v_\stateLat\leq \ub{\alpha}\, v_\stateLon$ and $v_\stateLat\geq \lb{\alpha}\, v_\stateLon$ account for limited lateral movement of the vehicle with bounds $\lb{\alpha}, \ub{\alpha}$ that can be computed based on a maximum steering angle $\ub{\delta}$ and the maximum absolute signed curvature $\ub{\kappa}=\kappa(s^*)$, with~$s^*=\arg\max_{0\leq s \leq \ub{s}} |\kappa(s)|$ along a lookahead distance~$\ub{s}$~\cite{Quirynen2023}.
An acceleration limit $\ub{a}_\mathrm{fric}$ of the point-mass model formulated as box constraints $||u||_\infty\leq\ub{a}_\mathrm{fric}$ inner-approximate tire friction constraints related to Kamm's circle~\cite{Althoff2017}.
Since we formulate our model in the Frenet coordinate frame, the lateral acceleration bounds $\ub{a}_n, \lb{a}_n$ are modified, based on the centrifugal acceleration, resulting in $\ub{a}_n=\ub{a}_\mathrm{fric} + \ub{\kappa} v_{\stateLon,0}^2$ and $\lb{a}_n=-\ub{a}_\mathrm{fric} + \ub{\kappa} v_{\stateLon,0}^2$.
The fixed parameter $d_\mathrm{bnd}$ is used as the safety distance to the road boundary, bounds $\ub{u},\lb{u}$ for acceleration limits, and $\ub{v}_\stateLon, \ub{v}_\stateLat$ for the maximum velocity.
In order to account for the road width, bounds $\lb{n}\le \stateLat \le\ub{n}$ on the lateral position are used. 
The model used within \expert{} is able to approximate the dynamics in a variety of situations~\cite{Quirynen2023}. However, certain maneuvers, such as sharp turns, may require additional modeling concepts~\cite{Kessler2020}.

\subsection{Collision Avoidance Constraints}
Collision avoidance constraints for $n_\mathrm{obs}$ obstacles are formulated by considering the ego vehicle as a point mass and using a selection matrix~$\selMat\in \R^{2\times n_x}$, with $\posFrenet=\selMat x$ that selects the position states~$\posFrenet$ from the states~$x$. 
The true occupied obstacle space $\setCar_j \subseteq\R^2$, for $j\in I(\dimObs-1)$, is expanded to include all possible configurations where the ego and obstacle vehicle are in a collision in the curvilinear coordinate frame, resulting in an ellipsoid~$\setSafe_j$, cf. Fig.~\ref{fig:vehicle_shape}.
\begin{figure}
	\begin{center}
		\def\svgwidth{.4\textwidth}
		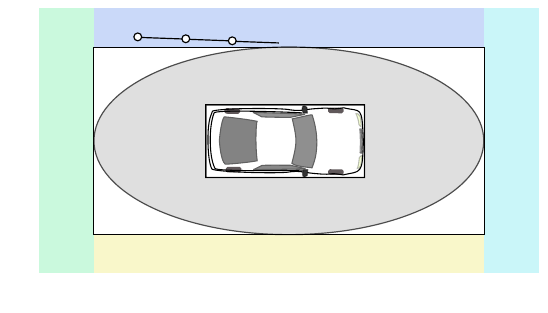
		\caption{Vehicle over-approximations. All trajectories outside of $\setSafe$ are considered free of collision. The \expert{} plans with the most conservative obstacle set $\setOut$. The ellipsoidal smooth over-approximation $\setSafe$ is used within the \safetyfilter{} and the \ac{NMPC}. The four colored regions are uniquely determined by the binary variables $\binY$, where in each region exactly one binary variable is equal to one.}
		\label{fig:vehicle_shape}
	\end{center}
\end{figure}
The \expert{} uses a road-frame-aligned rectangular constraint $\selMat \xPm_i\notin\setOut_j$, for all $ i \geq 0, j\in I(\dimObs-1)$, which over-approximates the ellipsoidal set $\setSafe_j$, leading to additional robustness of the multi-layer control architecture, with $\setCar_j\subseteq\setSafe_j\subseteq\setOut_j\subseteq\R^2$ (cf., Fig.~\ref{fig:sw_arch} and Tab.~\ref{table:overview_opti}).
For each obstacle $j$, the rectangular shape of $\setOut_j$ can be characterized by the boundaries $d^\top=[s_\mathrm{f}, s_\mathrm{b}, n_\mathrm{l}, n_\mathrm{r}]^\top$.
Four collision-free regions $k \in \{\mathrm{f,b,l,r}\}$ around an obstacle are defined, where each region can be expressed by a convex set~$A^{k}(d)\,\posFrenet\leq b^{k}(d)$, with either one for $k\in\{f,b\}$ or three half-space constraints for $k\in\{l,r\}$, cf., Fig.~\ref{fig:vehicle_shape}.
Four binary variables $\binY=[\binY_\mathrm{f},\binY_\mathrm{b},\binY_\mathrm{l},\binY_\mathrm{r}]^\top\in\mathbb{B}^4$
are used with a big-M formulation to ensure that the vehicle is inside one of the four regions.
Therefore, with $\boldsymbol{1}$ in the according dimension of either $\boldsymbol{1}_1$ for $k\in\{f,b\}$ or  $\boldsymbol{1}_3$ for $k\in\{l,r\}$, the collision-free set is
\begin{align}
\label{eq:cobstacle_constr}
\begin{split}	
	&\setFreeOut(d, \ub{\sigma}) = 
	\Big\{ (\posFrenet,\binY,\sigma) \in (\R^{2},\mathbb{B}^4,\R )\Big| \forall k \in\{\mathrm{f,b,l,r}\}:\\
	& A^{k}(d)\,\posFrenet\leq b^{k}(d)+\boldsymbol{1}(1-\binY_k) M + \boldsymbol{1}(\sigma - 1)\ub{\sigma}_k, \;
	 \boldsymbol{1}^{\top}_{4}\binY=1
	\Big\}.
\end{split}
\end{align}
The values $\ub{\sigma}_k$ define additional safety margins for each of the four collision-free regions. The slack variable $\sigma\in\R$ is bounded $0\le\sigma\leq 1$, such that only points in the exterior of $\setOut_j$ satisfy the condition in~\eqref{eq:cobstacle_constr}.
A model predicts the future positions of the obstacle boundaries $d_i^j$ for each obstacle $j\in I(\dimObs-1)$ at time steps $i \in I(\dimHor)$ along the horizon.

\subsection{\ac{MIQP} Cost Function}
The \ac{MIQP} cost comprises quadratic penalties for the state vector $\xPm\in \R^{\dimPm}$ with weight $Q\in\R^{\dimPm\times\dimPm}$, and the control vector $\uPm\in\R^{\dimInp}$ with weight $R\in\R^{\dimInp\times\dimInp}$, for tracking of their references $\xPmRef$ and $\uPmRef$, respectively. 
The reference $\xPmRef_i=[\refer{\stateLon}_i,\refer{\stateLat}_{i},\refer{v}_\stateLon,0]^\top$ at time step $i$ is determined by the desired velocity $\refer{v}_\stateLon$, as well as by the binary control vector $\lambda=[\lambda^\mathrm{up},\lambda^\mathrm{down}]^\top \in \mathbb{B}^{2}$.
These binary variables are used to determine lane changes at time step $i$, resulting in the road-aligned lateral reference $\refer{X}_n=[{\refer{\stateLat}}_{0},\ldots,{\refer{\stateLat}}_{N}]^\top\in\R^{N+1}$
always being the center of the target lane.
The longitudinal position $\refer{\stateLon}_i$ follows from the velocity $\refer{v}_\stateLon$.
The \ac{MIQP} cost function reads
\begin{align}
	\label{eq:miqp_cost}
	\begin{split}
	& \sum_{i=0}^{\dimHor}\norm{\xPm_{i}-\xPmRef_{i}}_{Q}^2 + 
	\sum_{i=0}^{\dimHor-1}\norm{\uPm_{i}-\uPmRef_{i}}_{R}^2 +\\
	& \weightLc \sum_{i=0}^{\dimHor-1} \boldsymbol{1}^\top_2\binL_i + 
	 \weightRight \sum_{i=0}^{\dimHor} n_i + 
	 \weightDist  \sum_{j=0}^{\dimObs-1}\sum_{i=0}^{\dimHor} (\sigma_i^j)^{2}, \\
	\end{split}
\end{align}
including a penalty with weight $\weightRight>0$ to minimize deviations from the right-most lane,
a weight $\weightLc>0$ to penalize lane changes, and a weight $\weightDist>0$ penalizing slack variables to avoid being too close to any obstacle.

\subsection{Parametric \ac{MIQP} Formulation}
\label{sec:MIQP_formulation}
For a horizon of $\dimHor$ steps, the binary \ac{MIQP} decision variables are $\BinL=[\binL_0,\ldots,\binL_{\dimHor-1}]\in\mathbb{B}^{2 \times \dimHor}$ and $\BinY=[\binY_0^0,\binY_1^0,\ldots,\binY_\dimHor^{\dimObs-1}]\in\mathbb{B}^{4 \times \dimObs(\dimHor+1)}$, the real-valued states are $\XPm=[\xPm_{0},\ldots,\xPm_{\dimHor}]\in\R^{4 \times (\dimHor+1)}$, the control inputs are $\UPm=[\uPm_{0},\ldots,\uPm_{\dimHor-1}]\in\R^{2 \times \dimHor}$ and the slack variables are $\Sigma=[\sigma_0^0,\sigma_1^0,\ldots ,\sigma_\dimHor^{\dimObs-1}] \in \R^{\dimObs(\dimHor+1)}$.
Hard linear inequality constraints are
\begin{align*}
	\begin{split}
		&\{H(\XPm,\UPm)\geq0\} \;\Leftrightarrow \\
		\hspace{-3mm}\{ \XPm,\UPm |\; &  \lb{u}\leq \uPm_i\leq \ub{u},\quad i \in I(\dimHor-1), \\
        &  \lb{x}\leq \xPm_i\leq \ub{x}, \quad
		\lb{\alpha}\, v_{s,i} \leq v_{\mathrm{n},i}\leq \ub{\alpha}\, v_{\mathrm{s},i},\quad i \in I(\dimHor)
		\}.
	\end{split}
\end{align*}
The parameters $\Pi=(\xPmIni,\refer{v}_\stateLon,n_\mathrm{lanes}, D)$ include the initial ego state $\xPmIni$, the desired velocity $\refer{v}_\stateLon$, the number of lanes $n_\mathrm{lanes}$ and the time-dependent obstacle bounds $D$, where 
\begin{equation*}
	D=\big\{d^j_i\, \big|\, \forall j\in I(\dimObs-1), \forall i\in I(\dimHor)\big\}.
\end{equation*}
The parametric \ac{MIQP} solved within each iteration of the \ac{MIP-DM} is
\begin{mini!}[3]
	{\XPm,\UPm,\refer{X}_n,\BinL,\BinY,\Sigma}			
	{J^\mathrm{e}(\XPm,\UPm,\refer{X}_n,\BinL,\Sigma)}
	{\label{eq:MIQP}} 
	{} 
	\addConstraint{\xPm_0}{= \xPmIni,\qquad H(\XPm,\UPm)\geq0,}{0\le\Sigma\le1},
    \addConstraint{{\refer{\stateLat}}_{i+1}}{={\refer{\stateLat}}_{i}+\laneWidth\lambda^\mathrm{up}_i-\laneWidth\lambda^\mathrm{down}_i,}    {}
	\label{eq:miqp_line_ref_prog}
	\addConstraint{\xPm_{i+1}}{= A\xPm_{i}+B\uPm_{i},}{i\in I(\dimHor-1)},
	\addConstraint{(\selMat\xPm_i,\binY_i^j,\sigma_i^j)}{\in \setFreeOut(d_i^j, \ub{\sigma}_i^j), }{i \in I(\dimHor)\nonumber},
	\addConstraint{}{}{j \in I(\dimObs-1)},
\end{mini!}
where the cost $J^\mathrm{e}(\cdot)$ is defined in~\eqref{eq:miqp_cost}, and $\refer{\stateLat}_{0}$ is the lateral position of the center of the desired lane.
An \ac{MIQP} solving~\eqref{eq:MIQP} is used as an ``expert'' to collect supervisory data, i.e., feature-label pairs $(\Pi,\optim{\Bin})$, where $\optim{\Bin}$ is the optimal value of binary variables, cf., Fig.~\ref{fig:sw_arch}.
For the closed-loop evaluation of the \expert{}, the optimizer~$\XPmOpt$ is used as the output of the \expert{}~$\XExp$.

\section{Scalable Equivariant Deep Neural Network}
\label{sec:nn}
Because the \ac{MIQP}~\eqref{eq:MIQP} is computationally demanding to solve in real-time, especially for long prediction horizons and a large number of obstacles, we propose a novel variant of the \ac{COCO} algorithm~\cite{Abhishek2022a} to accelerate \ac{MIQP} solutions using supervised learning. We train a \ac{NN} to predict binary variables and then solve the remaining convex \ac{QP} online, after fixing the binary variables.
The \ac{MIQP}~\eqref{eq:MIQP} comprises $4\dimObs\dimHor$ structured binary variables related to obstacles and $2\dimHor$ lane change variables. 
We refer to the latter as \emph{unstructured} binary variables because, differently from the others, there is no specific relation besides recurrence among them.

In the following, we first describe the desired properties of the \ac{NN} prediction in Sec.~\ref{sec:nn_properties} and review the classification of binary variables in Sec.~\ref{sec:nn_classification}.
Then, in Sec.~\ref{sec:nn_equivariance}, we introduce one of our main contributions, the \ac{REDS} to achieve the desired properties. 
Finally, we show in Sec.~\ref{sec:nn_ensemble} how to use an ensemble of \acp{NN} to generate multiple predictions.

\subsection{Desired Predictor Properties for Motion Planning}
\label{sec:nn_properties}
The desired properties of the predictor may be divided into performance, i.e., general metrics that define the \ac{NN} prediction performance, and structural properties, i.e., structure-exploiting properties related to Requirements~\ref{req:coll}-\ref{req:var}.

\subsubsection{Performance}
The prediction performance is quantified by the likelihood of predicting a feasible solution $\feas$, and a measure of optimality $\costRatio$.
However, supervised learning optimizes accuracy, i.e., cross-entropy loss, which was shown to correlate well with feasibility~\cite{Bertsimas2021,Abhishek2022,Abhishek2022a}. In fact, we evaluated the \ac{PCC} for our experiments and obtained a \ac{PCC} of~$0.81$ for the correlation between the training loss and the infeasibility and a \ac{PCC} of~$0.88$ between accuracy and feasibility.
In addition, the computation time~$\tcomp$ to evaluate the \ac{NN} is important for real-time feasibility. We aim for~$\tcomp$ to be very small compared to the \ac{MIQP} solution time, and~$\tcomp < t_\mathrm{p}\leq0.2\mathrm{s}$ (see Req.~\ref{req:comp_t}). Finally, the memory footprint of the \ac{NN} should be small for implementation on embedded microprocessors~\cite{DiCairano2018tutorial}.

\subsubsection{Structural}
\label{sec:struct_properties}
First, the \nnmodule{} needs to operate on a variable number of obstacles, which is required in real traffic scenarios, see Req.~\ref{req:var}.
Second, to comply with Req.~\ref{req:eq}, obstacle-related predictions need to be equivariant to permutations on the input, 
see Def.~\ref{def:equivariant}. For unstructured binary variables, the predictions should be permutation invariant, see Def.~\ref{def:invariant}.
Third, again relating to Req.~\ref{req:var}, the \ac{NN} architecture is expected to generalize to unseen data.
Particularly, it should provide accurate predictions for several obstacles that may not be present in the training data.
Finally, the \ac{NN} should predict multiple guesses to increases the likelihood of feasibility and/or optimality.
The proposed \nnmodule{} provides the desired structural properties and it improves the performance properties.

Since we consider a highly structured problem domain, we propose to directly include the known structure into the \ac{NN} architecture. Alternatively, one could learn the structure for general problems by, e.g., attention mechanisms~\cite{Vaswani2017}. However, the related attention-based architectures usually have a high inference time which may clash with the desired fast online evaluation.

\subsection{Prediction of Binary Variables by Classification}
\label{sec:nn_classification}
As the authors in~\cite{Bertsimas2022,Abhishek2022} show, solving the prediction of binary variables as a multi-class classification problem, yields superior results than solving it as a regression problem.
Since the naive enumeration of binary assignments grows exponentially, i.e., the number of  assignments is~$2^{|\Bin|}$, where $|\Bin|$ is the number of binary variables, an effective strategy is to enumerate only combinations of binary assignments that actually appear in the data set.
While it cannot guarantee to predict all combinations, this leads to a much smaller number of possible classes and it has been observed that the resulting classifications still significantly outperform regression (cf.,~\cite{Abhishek2022}).

\subsection{Recurrent Equivariant Deep Set Architecture}
\label{sec:nn_equivariance}
The \ac{REDS} architecture, shown in Fig.~\ref{fig:nn_arch}, achieves the structural properties and improves the prediction performance.
We use training data that consists of feature-label pairs $(\Pi, \Bin)$.
The features $\Pi$ are split into obstacle-related features $\eqFeat=\{\eqFeat_0,\ldots,\eqFeat_{\dimObs-1}\}$, where $\eqFeat_i\in\R^{\dimFeatEq}$, and unstructured features $\usFeat\in \R^{\dimFeatUs}$.
The equivariant features $\eqFeat=(z_j,d_j)$ are the \emph{initial} obstacle state $z_j$ and its spatial dimension $d_j$, since all states along the horizon are predicted based on the initial state. 
The unstructured features contain all other parameters of $\Pi$, i.e., $\usFeat=(\xPmIni,\refer{v}_{\stateLon},n_\mathrm{lanes})$.

The work in~\cite{Zaheer2017} proposes a simple but effective \ac{NN} architecture that provides either permutation equivariance or invariance.
The blocks are combined in our tailored \emph{encoder layer} that maintains permutation equivariance for the equivariant outputs and invariance for the unstructured outputs, see Fig.~\ref{fig:nn_arch}.
A hidden state $\hUs\in\R^{\dimHidden}$ is propagated for unstructured features and hidden states $\hEq_j\in\R^{\dimHidden}$, with $j \in I(\dimObs-1)$, for the equivariant features, where $\hEq=[\hEq_0,\ldots,\hEq_{\dimObs-1}]^\top\in\R^{\dimObs\times\dimHidden}$. 
The encoder layer has four directions of information passing between the fixed-size unstructured and the variable-size equivariant hidden states with input dimension $\dimHidden$ and output dimension $\dimHidden'$:
\begin{enumerate}
	\item Equivariant to Equivariant:
	Equivariant deep sets~\cite{Zaheer2017} are used as layers with ReLU activation functions $\sigma(\cdot)$ and parameters $\Theta^\mathrm{ee},\Gamma^\mathrm{ee}\in\R^{\dimHidden\times\dimHidden'}$:
	\begin{equation*}
		f^\mathrm{ee}(\hEq)=\sigma(\hEq\Theta^\mathrm{ee}+\boldsymbol{1}\boldsymbol{1}^\top\hEq\Gamma^\mathrm{ee}).
	\end{equation*}
	\item Equivariant to Unstructured:
	To achieve invariance from the set elements $\hEq_j$ to the unstructured hidden state $\hUs$, the invariant layer of~\cite{Zaheer2017} is added that sums up over the set elements with parameters $\Theta^\mathrm{eu}\in\R^{\dimHidden\times\dimHidden'}$,
	\begin{equation*}
		f^\mathrm{eu}(\hEq)=\sigma\Bigg(\Big(\sum_{j=0}^{\dimObs-1}\hEq_j\Big)\Theta^\mathrm{eu}\Bigg).
	\end{equation*}
	\item Unstructured to Equivariant:
	To implement a dependency of the equivariant elements on the unstructured hidden state while maintaining equivariance, this layer equally influences each set element by 
	\begin{equation*}
		f^\mathrm{ue}(\hUs)=\sigma(\boldsymbol{1}_{\dimObs} \otimes \hUs\Theta^\mathrm{ue}),
	\end{equation*}
	with parameters $\Theta^\mathrm{ue}\in\R^{\dimHidden\times\dimHidden'}$.
	\item Unstructured to Unstructured:
	We use a standard feed-forward layer with parameters $\Theta^\mathrm{uu}\in\R^{\dimHidden\times\dimHidden'}$
	\begin{equation*}
		f^\mathrm{uu}(\hUs)=\sigma(\hUs\Theta^\mathrm{uu}).
	\end{equation*}
\end{enumerate}
\Acp{FF} act as encoders to the input features, which allows to matching the dimensions of the equivariant and unstructured hidden states.
For each layer of the \ac{REDS} in Fig.~\ref{fig:nn_arch}, the contributions are summed up to obtain the new hidden states
\begin{align*}
	{\hEq}'&=f^\mathrm{ee}(\hEq) + f^\mathrm{ue}(\hUs),\\
	{\hUs}'&=f^\mathrm{uu}(\hUs) + f^\mathrm{eu}(\hEq).
\end{align*}
A \ac{LSTM} is used as \emph{decoder} for each equivariant hidden state, transforming the hidden state into a time series of binary predictions, see Fig.~\ref{fig:nn_arch}. Another \ac{LSTM} is used as a \emph{decoder} for unstructured hidden states.
For the \ac{REDS}, the classification problem per time step (Sec.~\ref{sec:nn_classification}) needs to consider only four classes per obstacle (one for each collision-free region), cf. Fig.~\ref{fig:vehicle_shape}, and three classes for lane changes (change to the left or right, stay in lane).
\begin{small}
\begin{figure}
	\begin{center}
		\fontsize{5pt}{5pt}
		\def\svgwidth{.48\textwidth}
		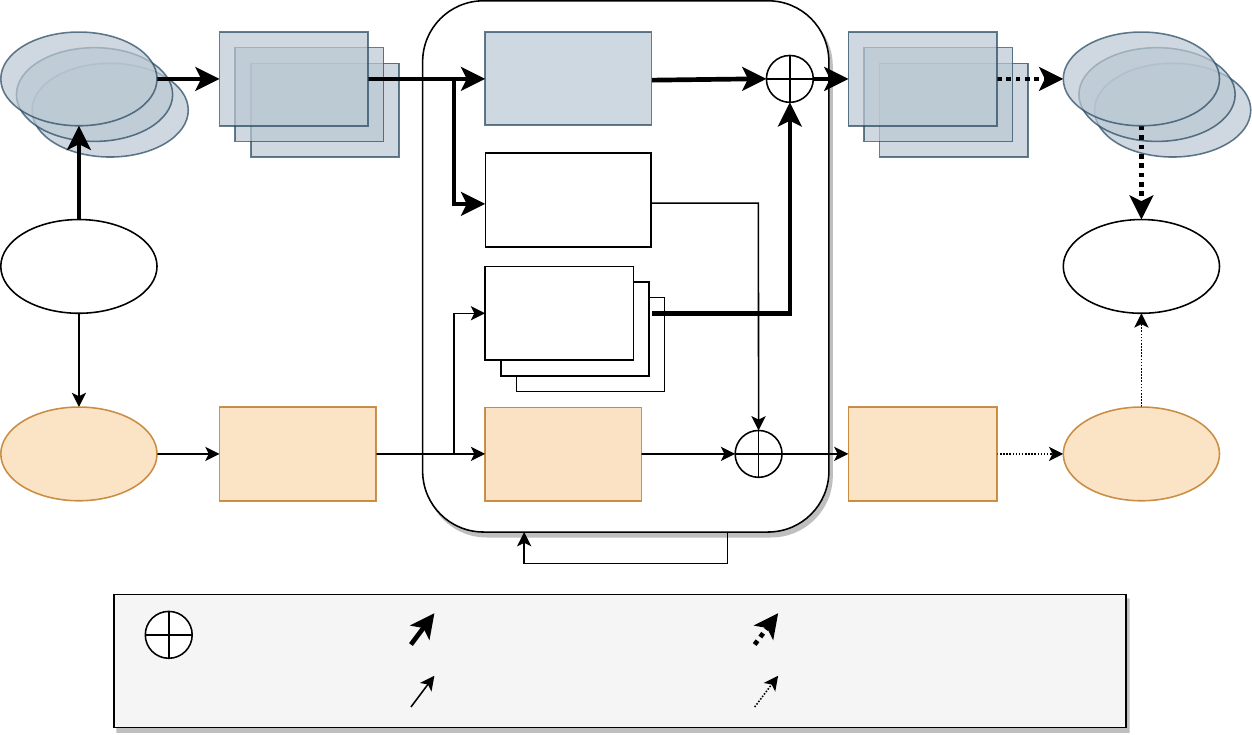
		\caption{\nndslstm{}. The blue blocks show the propagation of equivariant features, whereas the orange blocks show the propagation of unstructured features. An invariant connection couples both hidden states.}
		\label{fig:nn_arch}
	\end{center}
\end{figure}
\end{small}
\subsection{Neural Network Ensembles for Multiple Predictions}
\label{sec:nn_ensemble}
As suggested in early works~\cite{Hansen1990, Sollich1995}, using an ensemble of $\dimEnsemble$ stochastically trained \acp{NN} is a simple approach to obtain multiple guesses and improve classification accuracy.
Producing multiple predictions, the lowest cost maneuver among different candidate solutions can be selected, e.g., staying behind a vehicle or overtaking.
For typical classification tasks, no a-posteriori oracle exists that identifies the \emph{best} guess~\cite{Hansen1990}.
However, in our problem setup, the \slackqp{} solution can directly evaluate the feasibility and optimality and therefore, identify the best guess, as discussed next. 

\section{Soft \ac{QP} Solution and Selection Method}
\label{sec:slacked_QP}
For each candidate solution from the ensemble of \acp{NN}, the \slackqp{} is constructed based on the \ac{MIQP}~\eqref{eq:MIQP} with fixed binary variables and unbounded slack variables for the obstacle constraints. 
Each candidate solution consists of the binary values $\hat{\BinL}=[\hat{\binL}_0,\ldots,\hat{\binL}_{\dimHor-1}]$ and $\hat{\BinY}=[\hat{\binY}_0^0,\hat{\binY}_1^0,\ldots,\hat{\binY}_\dimHor^{\dimObs-1}]$  that are predicted by the \ac{NN}. The reference $\refer{X}_n$ in~\eqref{eq:miqp_line_ref_prog} is also fixed when the binary variables are fixed.
The resulting \slackqp{} is convex and a feasible solution always exists due to removing the upper bound for each slack variable $\sigma_i^j$, with $j\in I(\dimObs-1)$, $i \in I(\dimHor)$ and
	\begin{mini!}[3]
		{\XPm,\UPm,\Sigma}			
		{J^\mathrm{s}(\XPm,\UPm,\Sigma)\label{eq:slackqp_cost}}
		{\label{eq:slackqp}} 
		{J^\mathrm{s*}=} 
		\addConstraint{\xPm_0}{= \xPmIni, \;\; H(\XPm,\UPm)\geq0,}{\Sigma\geq 0}{},
		\addConstraint{\xPm_{i+1}}{= A\xPm_{i}+B\uPm_{i},}{i\in I(\dimHor-1)},
	\addConstraint{(\selMat\xPm_i,\sigma_i^j)}{\in \setFreeOut(d_i^j, \hat{\binY}_i^j), \quad}{i \in I(\dimHor)\nonumber},
	\addConstraint{}{}{j \in I(\dimObs-1)},
		\label{eq:slackqp_set}
	\end{mini!}
where the cost $J^\mathrm{s}(\cdot)$ is~\eqref{eq:miqp_cost}, see Tab.~\ref{table:overview_opti}.
We construct~\eqref{eq:slackqp_set} from~\eqref{eq:cobstacle_constr} by removing the safety margins $\ub{\sigma}_i^j$ and fixing the binary variables to the predicted solution guess $\hat{\BinY}$.
Therefore, the \slackqp{}~\eqref{eq:slackqp} is a relaxation of \ac{MIQP}~\eqref{eq:MIQP} with fixed integers.
Problem~\eqref{eq:slackqp} is solved for each prediction 
of the \ac{NN} ensemble, and the solution leading to the lowest cost for~\eqref{eq:slackqp_cost} is selected as output $\XNN{}$ of the module, see Fig.~\ref{fig:sw_arch}. The \slackqp{} is convex and can be solved efficiently using a structure exploiting \ac{QP} solver~\cite{Acados2021,Frison2020}.
Despite solving multiple \acp{QP} for multiple candidate solutions, the computational burden is much lower than solving an \ac{MIQP} that typically requires solving a combinatorial amount of convex relaxations.

\section{Feasibility Projection and SQP Algorithm}
\label{sec:feas_proj}
The \slackqp{} solution $(\XNN{}, \UNN{})$ may not be collision-free due to binary classification errors from the ensemble of \acp{NN}, i.e., some of the slack variables may be nonzero in the \slackqp{} solution.
In order to project the \slackqp{} solution to a collision-free trajectory, a smooth convex-concave \ac{NLP}, referred to as \safetyfilter{}, is solved in each iteration.
The \safetyfilter{} solves an optimization problem that is similar to~\eqref{eq:slackqp} (see Tab.~\ref{table:overview_opti}), but the reference trajectory is equal to the \slackqp{} solution, i.e., $\XPmRef:=\XNN{}$ and $\UPmRef:=\UNN{}$. In addition, the obstacle constraints in~\eqref{eq:slackqp_set} are replaced by smooth concave constraints based on the ellipsoidal collision region, cf. Fig.~\ref{fig:vehicle_shape}, which allows the use of an efficient SQP algorithm, e.g.,~\cite{Debrouwere2013}.

\subsection{Optimization Problem Formulation}
With the geometry parameter~$d$ of an obstacle, the inner ellipse~$\setSafe$ within~$\setOut$ is used for defining the safe-set~$\setFreeSafe$.
The ellipse center and axis matrix
\begin{align*}
	t(d)&=\frac{1}{2}[(s_\mathrm{f}+s_\mathrm{b}),(n_\mathrm{l}+n_\mathrm{r})]^\top, \\
	\Upsilon(d)&=\frac{1}{\sqrt{2}}\mathrm{diag}([s_\mathrm{f}-s_\mathrm{b},n_\mathrm{l}-n_\mathrm{r}]),
\end{align*}
are used to formulate the smooth obstacle constraint
\begin{align}
	\label{eq:collision_ellipse}
	\begin{split}
		\setFreeSafe(d) = 
		\Bigl\{(p, \xi) \Big\vert
		\norm{p-t(d)}^2_{\Upsilon^{-1}(d)}
		\geq 1-\xi\Bigr\},
	\end{split}
\end{align}
with slack variables $\xi \ge 0$. A tracking cost

\begin{equation}
	\label{eq:safetyfilter_cost}
	J^\mathrm{f}_\mathrm{tr}(\XPm,\UPm) = \sum_{i=0}^{\dimHor}\norm{\xPm_{i}-\xPmRef_{i}}_{Q}^2 + \sum_{i=0}^{\dimHor-1}\norm{\uPm_{i}-\uPmRef_{i}}_{R}^2,
\end{equation}
and a slack violation cost

\begin{equation}
	\label{eq:safetyfilter_cost_slack}
	J^\mathrm{f}_\mathrm{slack}(\Xi) = \weightHard \sum_{j=0}^{\dimObs-1}\sum_{i=0}^{\dimHor} \xi_i^j,
\end{equation}
are defined, where $\Xi=\{\xi_i^j|i\in I(N),j\in I(n_\mathrm{obs}-1)\}$. The penalty~$\weightHard\gg0$ is sufficiently large such that a feasible solution with~$\xi_i^j = 0$ can be found when it exists.
The resulting \ac{NLP} can be written as
\begin{mini!}[3]
	{\XPm,\UPm,\Xi}			
	{J^\mathrm{f}_\mathrm{tr}(\XPm,\UPm) + J^\mathrm{f}_\mathrm{slack}(\Xi)}
	{\label{eq:safetyfilter}} 
	{} 
	\addConstraint{\xPm_0}{= \xPmIni, \quad H(\XPm,\UPm)\geq0,\quad}{\Xi\geq 0},
	\addConstraint{\xPm_{i+1}}{= A\xPm_{i}+B\uPm_{i},\quad}{i\in I(\dimHor-1)},
	\addConstraint{(\selMat\xPm_i,\xi_i^j)}{\in \setFreeSafe(d_i^j),}{i \in I(\dimHor)\nonumber},
	\addConstraint{}{}{j \in I(\dimObs-1)},
	\label{eq:safetyfilter_concave}
\end{mini!}
using the least-squares tracking cost~\eqref{eq:safetyfilter_cost} and smooth obstacle avoidance constraints~\eqref{eq:collision_ellipse}. 
The optimal trajectory~$\XPmOpt$ of~\eqref{eq:safetyfilter} is the output of the \safetyfilter{}~$X^\mathrm{s}$ and also of the \nnmodule{} and tracked by the \ac{NMPC}, see Fig.~\ref{fig:sw_arch}. 
Since~$\setFreeOut\subseteq\setFreeSafe$, 
the \ac{NLP}~\eqref{eq:safetyfilter} is a smooth relaxation of the MIQP~\eqref{eq:MIQP}.
\subsection{Application of Sequential Quadratic Programming}
\ac{NLP}~\eqref{eq:safetyfilter} has a convex-concave structure. 
Except for the concave constraints~\eqref{eq:safetyfilter_concave}, the problem can be formulated as a convex \ac{QP}.
When solving \ac{NLP}~\eqref{eq:safetyfilter} using the Gauss-Newton \ac{SQP} method~\cite{Gros2016}, this guarantees a bound on the constraint violation for the solution guess at each SQP iteration~\cite{Tran2012}.
\begin{Proposition}
	\label{thm:fp_cost_decrease}
	Consider \ac{NLP}~\eqref{eq:safetyfilter}, let $\XPm_i,\UPm_i,\Xi_i$ be the primal variables after an \ac{SQP} iteration with Gauss-Newton Hessian approximation, and let $\XPm_0,\UPm_0,\Xi_0$ be an initial guess equal to the reference, i.e., $\XPm_0=\XPmRef$, $\UPm_0=\UPmRef$. Then, the decrease in the slack cost reads
	\begin{equation}
		\label{eq:thm_1}
		J^\mathrm{f}_\mathrm{slack}(\Xi_{i}) \leq J^\mathrm{f}_\mathrm{slack}(\Xi_{0})  - J^\mathrm{f}_\mathrm{tr}(\XPm_i,\UPm_i).
	\end{equation}
\end{Proposition}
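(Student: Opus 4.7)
The plan rests on two structural features of the \ac{NLP}~\eqref{eq:safetyfilter}. The tracking cost $J^\mathrm{f}_\mathrm{tr}$ is a convex quadratic (least-squares) function of $(X,U)$, so its Gauss-Newton Hessian is exact; and the slack cost $J^\mathrm{f}_\mathrm{slack}$ is linear in $\Xi$, so its linearization equals itself. The dynamics and box constraints are already affine. Consequently, the Gauss-Newton QP subproblem has objective $J^\mathrm{f}_\mathrm{tr}(X,U) + J^\mathrm{f}_\mathrm{slack}(\Xi)$ \emph{exactly}, and differs from the \ac{NLP} only in replacing the concave obstacle constraints~\eqref{eq:safetyfilter_concave} by their linearizations at the current iterate.

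Next I would verify that the initial guess $(X_0, U_0, \Xi_0)$ with $X_0 = \refer{X}$, $U_0 = \refer{U}$ is feasible for the first QP subproblem. The reference satisfies the dynamics and box constraints by construction, and since each slack $\xi_i^j$ is only bounded below by zero, $\Xi_0$ can be chosen componentwise large enough so that the original concave constraints in~\eqref{eq:collision_ellipse} hold at $(X_0, \Xi_0)$. At the linearization point, the linearized concave constraint coincides in value with the original one, so $(X_0, U_0, \Xi_0)$ also satisfies the QP's linearized obstacle constraints.

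The claim then follows from a standard descent argument. Since $(X_i, U_i, \Xi_i)$ minimizes the QP subproblem and $(X_0, U_0, \Xi_0)$ is feasible for it, the QP cost at the iterate is no larger than at the initial guess, and because the QP objective coincides with the true cost,
\begin{equation*}
J^\mathrm{f}_\mathrm{tr}(X_i, U_i) + J^\mathrm{f}_\mathrm{slack}(\Xi_i) \leq J^\mathrm{f}_\mathrm{tr}(X_0, U_0) + J^\mathrm{f}_\mathrm{slack}(\Xi_0).
\end{equation*}
Using $X_0 = \refer{X}$, $U_0 = \refer{U}$ to annihilate the tracking residual, so that $J^\mathrm{f}_\mathrm{tr}(X_0, U_0) = 0$, and rearranging yields~\eqref{eq:thm_1}.

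The step I would scrutinize most is the feasibility of the initial guess for the linearized QP: the argument requires both the unbounded slacks (to make the reference feasible for the original concave constraints) and the fact that linearizing a differentiable constraint at a feasible linearization point preserves feasibility there. Neither is deep, but together they are where the slacked convex-concave structure of~\eqref{eq:safetyfilter} is essential to the conclusion; all remaining steps are immediate from the exactness of the Gauss-Newton model on quadratic-plus-linear costs.
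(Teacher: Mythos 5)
Your overall strategy is the same as the paper's: show that the true objective $J^\mathrm{f}_\mathrm{tr}+J^\mathrm{f}_\mathrm{slack}$ is non-increasing along the Gauss--Newton \ac{SQP} iterates, then use $J^\mathrm{f}_\mathrm{tr}(\XPm_0,\UPm_0)=0$ at the reference initialization and $J^\mathrm{f}_\mathrm{tr}\geq 0$ to rearrange into~\eqref{eq:thm_1}. The difference is that the paper obtains the monotone decrease by citing Lemma~4.2 of Tran-Dinh et al., whereas you re-derive it from first principles via the two observations that the Gauss--Newton model of a quadratic-plus-linear objective is exact and that the linearized subproblem is feasible at its own linearization point. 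That self-contained derivation is a genuine added value, and your identification of the exactness of the objective model as the crux is correct.

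However, as written your descent step has a gap for $i>1$. You compare $(\XPm_i,\UPm_i,\Xi_i)$ directly against $(\XPm_0,\UPm_0,\Xi_0)$ inside ``the \ac{QP} subproblem,'' but the $i$-th iterate solves the subproblem whose concave constraints~\eqref{eq:safetyfilter_concave} are linearized at the $(i-1)$-th iterate, and the initial guess is in general \emph{not} feasible for that subproblem: the linearization of the convex quadratic in~\eqref{eq:collision_ellipse} underestimates it away from the linearization point, so the linearized constraint is strictly tighter than the original at $(\XPm_0,\Xi_0)$ (and inflating $\Xi_0$ to restore feasibility would change the right-hand side of~\eqref{eq:thm_1}). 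The argument must instead be telescoped: for each $k$, the iterate $(\XPm_k,\UPm_k,\Xi_k)$ is feasible for the subproblem linearized at itself (the linearized and original constraints coincide there, and the previous subproblem's inner-approximated feasible set guarantees feasibility for the original constraints), so the exactness of the objective model yields the one-step inequality $J^\mathrm{f}_\mathrm{tr}(\XPm_{k+1},\UPm_{k+1})+J^\mathrm{f}_\mathrm{slack}(\Xi_{k+1})\leq J^\mathrm{f}_\mathrm{tr}(\XPm_{k},\UPm_{k})+J^\mathrm{f}_\mathrm{slack}(\Xi_{k})$, and summing these from $k=0$ to $i-1$ gives the claim. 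This is exactly the content of the cited lemma; with the telescoping inserted, your proof is complete.
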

\begin{proof}
According to Lemma~4.2 of \cite{Tran2012}, cost of \eqref{eq:safetyfilter} decreases after each iteration for our problem structure, i.e.,
\begin{equation*}
	J^\mathrm{f}_\mathrm{tr}(\XPm_{i+1},\UPm_{i+1}) + J^\mathrm{f}_\mathrm{slack}(\Xi_{i+1}) \leq
	J^\mathrm{f}_\mathrm{tr}(\XPm_i,\UPm_i) + J^\mathrm{f}_\mathrm{slack}(\Xi_{i}). \nonumber
\end{equation*}
Consequently, Eq.~\eqref{eq:thm_1} can be verified, since it holds that $J^\mathrm{f}_\mathrm{tr}(\XPm_{0},\UPm_{0})=0$ due to the initialization equal to the reference and $J^\mathrm{f}_\mathrm{tr}(X,U)\geq 0, \forall X,U$, such that
\begin{align*}
	&J^\mathrm{f}_\mathrm{tr}(\XPm_i,\UPm_i) + J^\mathrm{f}_\mathrm{slack}(\Xi_{i})\leq
	J^\mathrm{f}_\mathrm{slack}(\Xi_{0}). \nonumber
\end{align*}
\end{proof}
Additionally, it can be guaranteed that the \ac{SQP} iterations remain feasible, i.e., collision-free, once a feasible solution is found, if the slack weights~$\weightHard\gg0$ are chosen \emph{sufficiently large}.
The latter requires to select a weight value such that the gradient of the \emph{exact} L1 penalty~\eqref{eq:safetyfilter_cost_slack} is larger than any gradient of the quadratic cost~\eqref{eq:safetyfilter_cost} in the bounded feasible domain. 
This property is due to the exact penalty formulation~\cite{Nocedal2006} and the inner approximations of the concave constraints \eqref{eq:safetyfilter_concave}, since a feasible linearization point in iteration~$j$ guarantees feasibility also in the next iterate~$j+1$~\cite{Debrouwere2013}.
Notably, choosing sufficiently large weights, yet not loo large to avoid ill-conditioned \acp{QP}, may be challenging in practice. Therefore, the weights could be increased in each iteration, if convergence issues were encountered~\cite{Nocedal2006}. However, we used constant weights without encountering numerical problems.

Under mild assumptions~\cite{Tran2012}, the SQP method converges to a stationary point of~\eqref{eq:safetyfilter}. The \safetyfilter{} provides a certificate of feasibility if the slack variables are zero, i.e., $\Xi_j=0$. We show that the \safetyfilter{} effectively increases the likelihood of computing a collision-free trajectory in numerical simulations.

\section{Implementation Details}
In this section, we list the most important implementation details.
Further information on the structure of the \ac{NN} model, training parameters and the \safetyfilter{} are in the Appendix.

\subsubsection{Numerical solvers}
For solving the \ac{MIQP} of the \expert{} and the \ac{QP} of the \slackqp{}, we use \texttt{gurobi}~\cite{Gurobi2023} and formulate both problems in \texttt{cvxpy}~\cite{Cvxpy2016}.
The \acp{NLP} of the \safetyfilter{}, as well as the \mpc{}, are solved by using the open-source solver \texttt{acados}~\cite{Acados2021} and the algorithmic differentiation framework \texttt{casadi}~\cite{Casadi2019}.
We use Gauss-Newton Hessian approximations, full steps, an explicit RK4 integrator and non-condensed \acp{QP} that are solved by~\texttt{HPIPM}~\cite{Frison2020}.
We use real-time iterations~\cite{Diehl2005} within the \mpc{} and limit the maximum number of \ac{SQP} iterations to~$10$ within the \safetyfilter{}.

\subsubsection{Training of the NN}
In order to formulate and train the \ac{NN}s, we use \texttt{PyTorch}. We train on datasets of~$10^5$ expert trajectories that we generate by solving the \expert{} with randomized initial parameters, sampled from an uniform distribution within the problem bounds, see Appendix~\ref{sec:appendix}. We use a learning rate of~$10^{-4}$ and a batch size of~$1024$.
The performance is evaluated on a test dataset of~$2\cdot10^3$~samples using a \emph{cross-entropy loss} and the \texttt{adam} optimizer~\cite{Adam2015}.

\subsubsection{Computations}
Simulations are executed on a \texttt{LENOVO ThinkPad L15 Gen 1} Laptop with an \texttt{Intel(R) Core(TM) i7-10510U @ 1.80GHz} CPU. The training and GPU evaluations of the \acp{NN} are performed on an Ubuntu workstation with two \texttt{GeForce RTX 2080 Ti PCI-E 3.0 11264MB GPUs}. Parts of the \nnmodule{}, namely the $\dimEnsemble{}$ \ac{NN} ensemble and the \slackqp{}, can be parallelized, which speeds up our approach by approximately the number of \acp{NN} used. 
Therefore, the \emph{serial} computation time
\begin{equation*}
t_\mathrm{s}=\sum_{i=1}^{\dimEnsemble}(t_{\mathrm{NN},i}+t_{\mathrm{QP},i})+t_\mathrm{FP},
\end{equation*}
includes each individual \ac{NN} inference time~$t_{\mathrm{NN},i}$, each \slackqp{} evaluation time~$t_{\mathrm{QP},i}$ and the \safetyfilter{} evaluation time~$t_\mathrm{FP}$.
The \emph{parallel} computation time is 
\begin{equation*}
t_\mathrm{p}=\max_{i=1,\ldots,\dimEnsemble}(t_{\mathrm{NN},i}+t_{\mathrm{QP},i})+t_\mathrm{FP}.
\end{equation*}

\subsubsection{Nonlinear Model Predictive Control}
The lower-level \mpc{} is formulated as shown in \cite{Reiter2021a}, which is similar to~\eqref{eq:safetyfilter}, but using a more detailed nonlinear kinematic vehicle model, a shorter sampling period and a shorter horizon~(see Tab.~\ref{table:overview_opti}).
The \mpc{} can be solved efficiently using the real-time iterations~\cite{Diehl2005}, based on a Gauss-Newton \ac{SQP} method~\cite{Gros2016} in combination with a structure exploiting \ac{QP} solver~\cite{Acados2021}.

\section{In-Distribution Evaluations}
\label{sec:evaluation}
First, we show the performance of the \ac{REDS} architecture, compared to the state-of-the-art architectures of~\cite{Abhishek2022}, and we demonstrate its \emph{structural properties}, see Sec.~\ref{sec:struct_properties}.
Next, we show how the \slackqp{} and the feasibility projection increase the prediction performance and the influence on the overall computation time.
In this section, we use the same distribution over the input parameters for training and testing of the \acp{NN}.

\subsection{Evaluation of the \ac{REDS}}
\label{sec:eval_nn}
We compare two variants of the proposed architecture.
First, the \ac{REDS} is evaluated, see Fig.~\ref{fig:nn_arch}.
Second, as an ablation study, the same architecture but without the \ac{LSTM}, referred to as \ac{EDS}, is evaluated.
In the \ac{EDS}, a \ac{FF} is used to predict the binary variables along the full prediction horizon~(see Fig.~\ref{fig:nn_arch}).
The performance is compared against the state-of-the-art architectures for similar tasks~\cite{Abhishek2022}, i.e., an \ac{LSTM} and a \ac{FF} network with a comparable amount of parameters.

The evaluation metrics are the infeasibility rate, i.e., the share of infeasible \slackqp{}s violating constraints (with nonzero slack variables) and the misclassification rate, i.e., the share of wrong classifications concerning the prediction of any binary variable.
If at least one binary variable in the prediction is wrongly classified, the whole prediction is counted as misclassified, even though the \slackqp{} computes a feasible low-cost solution.
Furthermore, we consider the suboptimality~$\costRatio$, i.e., the objective of the feasible \slackqp{} solutions~$J^{\mathrm{s}*}$ compared to the \expert{} cost~$J^{\mathrm{e}*}$,
\begin{equation}
\label{eq:suboptimallity}
    \costRatio=\frac{J^{\mathrm{s}*}-J^{\mathrm{e}*}}{J^{\mathrm{e}*}} \ge 0.
\end{equation}
A suboptimality of~$\costRatio=0$ means the cost of the prediction is equal to the optimal cost of the \expert{}.
Fig.~\ref{fig:nn_eval_nom} shows how the performance scales with the number of obstacles~$\dimObs$ and with the horizon length~$\dimHor$, without the \safetyfilter{} from Sec.~\ref{sec:feas_proj}. 
The \ac{REDS} and \ac{EDS} yield superior results to the \ac{LSTM} as soon as~$\dimObs\geq2$, and they vastly outperform the 
\ac{FF} network for an increasing number of obstacles and horizon length.
\begin{figure}
	\begin{center}
	\includegraphics[scale=0.68]{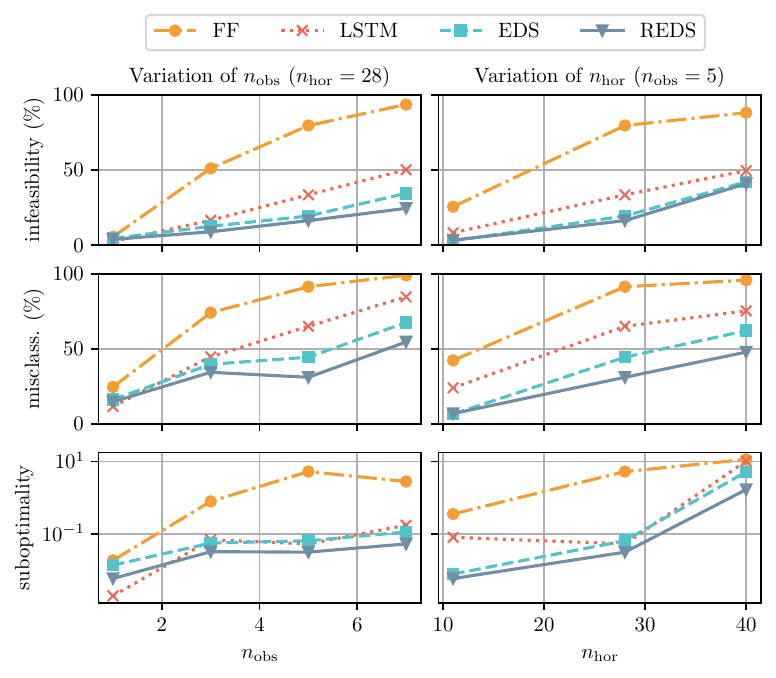}
		\caption{Performance evaluation for infeasibility rate, misclassification rate and suboptimality of different network architectures, depending on the number of obstacles and horizon length. The \nndslstm{}~outperforms the other architectures, particularly for a larger number of obstacles and a longer horizon. Suboptimality is shown in a logarithmic scale.}
		\label{fig:nn_eval_nom}
	\end{center}
\end{figure}

Besides improving the \emph{performance} metrics, the \ac{REDS} also provides the desired structural properties.
A \ac{REDS} network can be trained and evaluated with a variable number of obstacles and prediction horizon.
In Tab.~\ref{tab:interpolation}, the generalization performance of \ac{REDS} network is shown, i.e., it is evaluated for numbers of obstacles that were not present in the training data. 
Tab.~\ref{tab:interpolation} compares generalization for an \emph{interpolation} and \emph{extrapolation} of the number of obstacles and the prediction horizon.
According to Tab.~\ref{tab:interpolation}, \ac{REDS} generalizes well for samples out of the training data distribution for the number of obstacles and the horizon length.

\begin{table}
	\centering
	\ra{1.2}
	\begin{tabular}{@{}lll|ll|lll@{}}
		\addlinespace
		\toprule
		&\multicolumn{2}{c}{Training} & \multicolumn{2}{c}{Testing}& \multicolumn{3}{c}{Performance (\%)} \\
		&$\dimObs$ & $\dimHor$& $\dimObs$ & $\dimHor$ & misclass. & infeas. & subopt. \\
		\midrule
		&\multicolumn{7}{c}{Generalization for the number of obstacles: \emph{interpolation}}\\
		No general.&1-5 & 28 & 3 & 28 & 41.7 & 19.7 & 17.4\\
		General.&1,2,4,5 & 28 & 3 & 28 & \textbf{42.0} & \textbf{19.8}& \textbf{11.6}\\
		\midrule
		&\multicolumn{7}{c}{Generalization for the number of obstacles: \emph{extrapolation}}\\
		No general.&1-5 & 28 & 5 & 28 & 30.3 & 24.4 & 3.9\\
		General.&1-4 & 28 & 5 & 28 & \textbf{34.4} & \textbf{26.5} & \textbf{3.4}\\
		\midrule
		&\multicolumn{7}{c}{Generalization for the horizon length}\\
		No general.&1-3 & 16 & 1-3 & 16 & 20.4 & 8.9 & 38.1 \\
		General.&1-3 & 12 & 1-3 & 16 & \textbf{26.0} & \textbf{10.8}  & \textbf{20.9} \\
		\bottomrule
		\vspace{0.5px}
	\end{tabular}
	\caption{Evaluation of the \nndslstm{} generalization performance~(highlighted in bold).}
	\label{tab:interpolation}
\end{table}

\subsection{Evaluation for Ensemble of \ac{REDS} Networks}
\label{sec:eval_ensemble}
In Fig.~\ref{fig:nn_eval_ensemble}, we show the achieved feasibility rate on the test data using an ensemble with $\dimEnsemble=10$ \ac{REDS} networks.
In total, $2\cdot10^3$ samples are evaluated for each \ac{NN} individually and cumulatively.
The performance improves by adding more \ac{NN}s, however, also the total computation time increases.
The parallel computation time results in the maximum time over the individual networks.
\begin{figure}
	\begin{center}
		\includegraphics[scale=0.8]{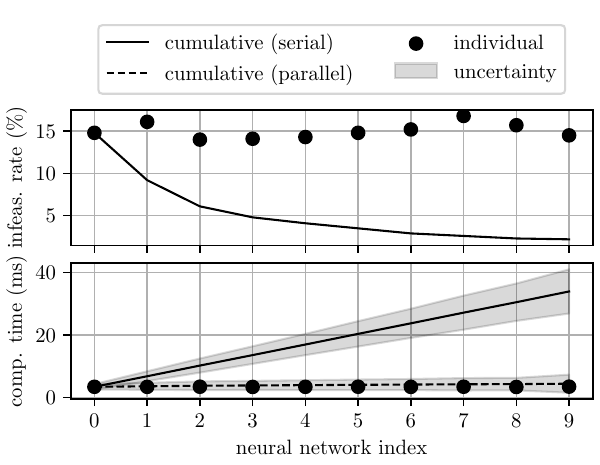}
		\caption{Individual and cumulative \ac{REDS} prediction performance (infeasibility rate) and computation time concerning the number of \acp{NN} used in the ensemble. The computation time differs for parallel and serial evaluation. For the parallel evaluation, the slowest network determines the computation time.}
		\label{fig:nn_eval_ensemble}
	\end{center}
\end{figure}
Tab.~\ref{tab:memory} shows the memory usage and the number of parameters for the different architectures. The sizes of the networks are feasible for embedded devices \cite{DiCairano2018tutorial}, i.e., the \ac{REDS} and \ac{EDS} provide improved performance with a comparable or even reduced memory footprint than \acp{FF} and \acp{LSTM}. This may be due to exploiting the structure of equivariances and invariances inherently occurring in the application. Similar observations were made in other applications where deep sets have been applied, e.g., \cite{Huegle2019}.
\begin{table}
	\centering
	\ra{1.2}
	\begin{tabular}{@{}lllll@{}}
		\addlinespace
		\toprule
		&FF&LSTM&EDS&REDS\\
		\midrule
        Memory usage (MB) &2.40&0.97&0.51&1.40\\
        Number of parameters ($10^5$)&5.98 &2.41&1.26&3.43\\
		\bottomrule
		\vspace{0.5px}
	\end{tabular}
	\caption{Memory usage and number of parameters in \ac{NN} architectures involving five obstacles and a horizon of 28 steps.}
	\label{tab:memory}
\end{table}

\subsection{Evaluation of \ac{REDS} Planner with Feasibility Projection}
\label{sec:eval_module}

\begin{figure}
	\begin{center}
		\includegraphics[scale=0.8]{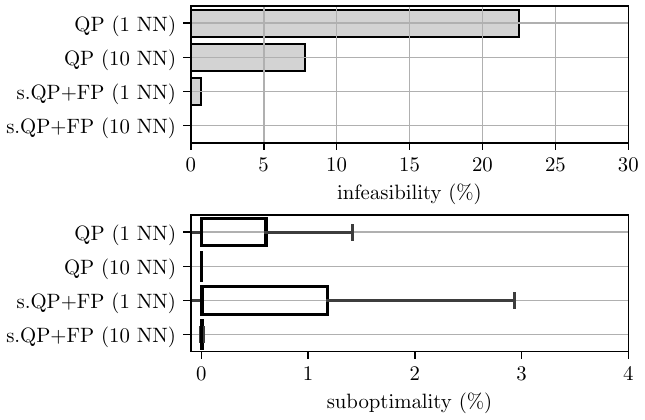}
		\caption{Open-loop comparison of infeasibility rate and suboptimality of the \nnmodule{}, using the \ac{QP} without slack variables and the \slackqp{} followed by the \safetyfilter{}. Infeasible problems are not considered in the suboptimality.}
		\label{fig:cost_open_loop}
	\end{center}
\end{figure}

The \nnmodule{} is validated on random samples of the test data, i.e., samples of the same distribution as the \ac{NN} training data.
Fig.~\ref{fig:cost_open_loop} shows the infeasibility rate and suboptimality~\eqref{eq:suboptimallity} of the \nnmodule{}, using either the \ac{QP} without slack variables, or the \slackqp{} followed by the \safetyfilter{}.
For an ensemble of ten \acp{NN}, the infeasibility rate of the \ac{QP} without slack variables is below~$10\%$ and decreased to almost~$0\%$ by using the \slackqp{} followed by the \safetyfilter{}, and the suboptimality is also negligible.
The suboptimality of the \slackqp{} followed by the \safetyfilter{} is higher than the suboptimality of the \ac{QP}, since also infeasible problems are rendered feasible, yet with increased suboptimality values.
Fig.~\ref{fig:timings_open_loop} shows box plots of the computation times related to the different components.
The main contributions to the total computation time of the \nnmodule{} stem from the \slackqp{} (median of~$\sim 4.2$ms) and the \acp{NN} (median of~$\sim 3.0$ms per network).
While the parallel architecture with ten \ac{NN}s, as well as a single \ac{NN}, decrease the worst case \ac{MIQP} computation time by a factor of approximately~$50$ and the median by a factor of~$10$, the serial approach decreases the maximum by a factor of~$8$ and the median by a factor of~$2$.
Besides computation time, the \nnmodule{} is suitable for embedded system implementation as opposed to high-performance commercial solvers like the one used here as an expert, i.e., \texttt{gurobi}~\cite{Gurobi2023}.

\begin{figure}
	\begin{center}
		\includegraphics[scale=0.8]{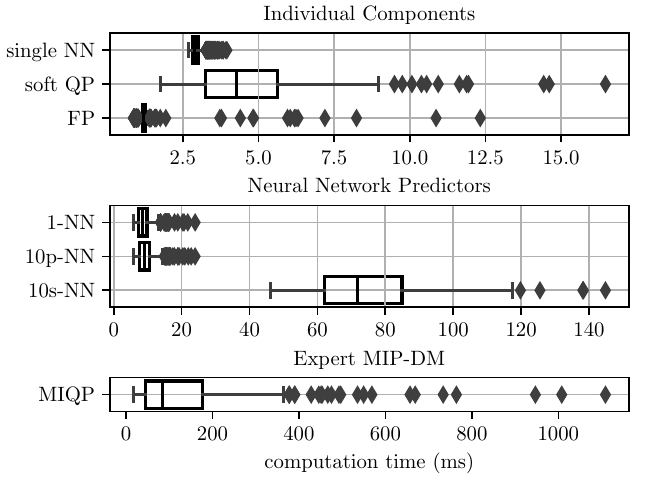}
		\caption{Box plots for open-loop comparison of the computation times of~$10^3$ samples. \nnmodule{} with an ensemble of one (1-NN) or ten \acp{NN} is parallelized (10p) or serial (10s).}
		\label{fig:timings_open_loop}
	\end{center}
\end{figure}

\section{Closed-loop Validations with \texttt{SUMO} Simulator}
\label{sec:eval_closed_loop}
The following closed-loop evaluations of the \nnmodule{} on a multi-lane highway scenario yield a more realistic performance measure. This involves challenges such as the distribution shift of the input parameters, wrong predictions and errors of the \mpc{}.

\subsection{Setup for Closed-loop Simulations}
Several choices need to be made for parameters in the \nnmodule{} and in the simulation environment.

\begin{figure*}
	\begin{center}
		\includegraphics[scale=0.6,trim={0mm 2mm 0mm 3mm},clip]{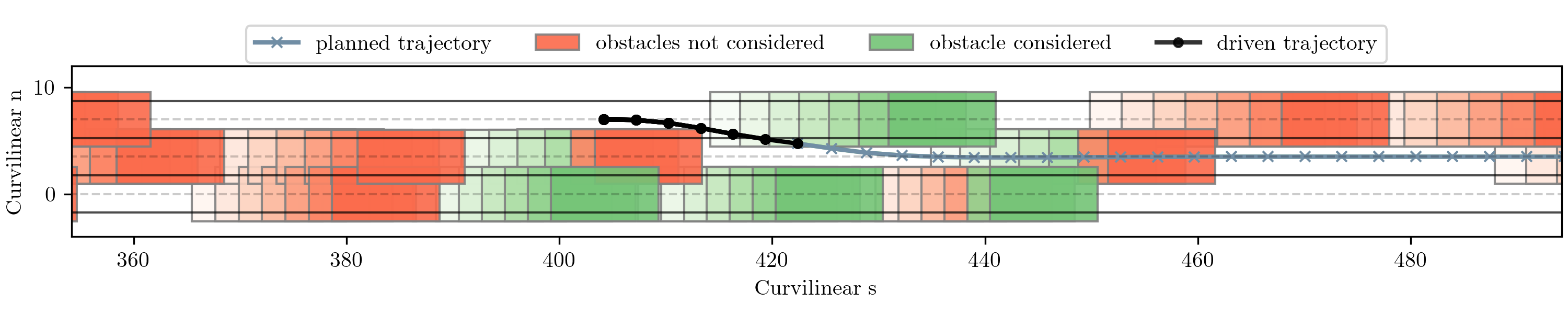}
		\caption{Obstacles considered at each planning step in the Frenet coordinate frame. The plot shows the ego point-mass trajectory and the inflated obstacles in proximity to the ego vehicle for seven time steps. The obstacle color indicates whether it was considered in planning or not.}
		\label{fig:obstacle_selection}
	\end{center}
\end{figure*}

\subsubsection{Collision avoidance}
For an environment with a large number of obstacles, we implement the following heuristic to select up to $\dimObs=5$ obstacles in proximity to the ego vehicle. We consider the closest successive obstacles in each lane that is not the lane of the ego vehicle and the leading vehicle on the ego lane.
In Fig.~\ref{fig:obstacle_selection}, the selection of obstacles in proximity to the ego vehicle is shown in the Frenet coordinate frame. Obstacles are plotted in consecutive planner time steps, and the color indicates whether they are considered at each time step.
Overtaking is allowed on both sides of a leading vehicle.

\subsubsection{Sampling frequency}
The planning frequency is set to~$5$Hz, the control and ego vehicle simulation rate is~$50$Hz and the \texttt{SUMO} simulation is~$10$Hz. 
The traffic simulator in \texttt{SUMO} is slower than the ego vehicle simulation frequency, therefore the motion of the vehicles is linearly extrapolated between \texttt{SUMO} updates.
Indeed, the \nnmodule{} is real-time capable for selected frequencies, see Req.~\ref{req:comp_t}.

\subsubsection{Vehicle models}
We use parameters of a \emph{BMW 320i} for the ego vehicle, which is a medium-sized passenger vehicle whose parameters are provided in \texttt{CommonRoad}~\cite{Althoff2017} for models of different fidelity. An \emph{odeint} integrator of \texttt{scipy} simulates the~$29$-state \emph{multi-body model}~\cite{Althoff2017} with a~$20$ms time step.
The traffic simulator \texttt{SUMO}~\cite{Sumo2018} simulates interactive driving behaviors with the \emph{Krauss model}~\cite{Krauss1998} for car following and the \emph{LC2013}~\cite{Erdmann2015} model for lane changing.
From zero up to five surrounding vehicles are selected for our comparisons.

\subsubsection{Road layout}
Standardized scenarios on German roads, provided by the \emph{scenario database} of \texttt{CommonRoad}, are fully randomized before each closed-loop simulation, including the start configuration of the ego vehicle and all other vehicles.
This initial randomization, in addition to the interactive and stochastic behavior simulated in \texttt{SUMO}, covers a wide range of traffic situations, including traffic jams, blocked lanes and irrational driver decisions such as half-completed lane changes.
The basis of our evaluations is the three-lane scenario \rdnetCol{} with all (\emph{dense}) or only a fourth (\emph{sparse}) of the vehicles in the database.

\subsection{Distribution Shift}
The generally unknown state distribution encountered during closed-loop simulations, referred to as \ac{SD}, differs from the \ac{TD}. We aim to generalize with the proposed approach to a wide range of scenarios, hence, we use the uniform distribution given in Tab.~\ref{table:sampling distribution_training} to train the \acp{NN} and in consecutive closed-loop evaluations. 
However, if the encountered \ac{SD} is known better, we propose to include \acp{NN} trained on an \emph{a priori} known \ac{SD} and include it in the ensemble of \acp{NN}.
In Fig.~\ref{fig:nn_distribution_shift}, the worse performance of an ensemble of \ac{REDS}, purely trained on the uniform \ac{TD}, followed by the \slackqp{} is shown when evaluated for samples taken from closed-loop simulations of the \rdnetCol{} scenario using the \expert{}. Additionally, Fig.~\ref{fig:nn_distribution_shift} shows \acp{NN} trained on this \ac{SD} and how they can improve the prediction performance as single networks by~$\sim10\%$ and in an ensemble by~$\sim 3\%$.
\begin{figure}
	\begin{center}
		\includegraphics[scale=0.8]{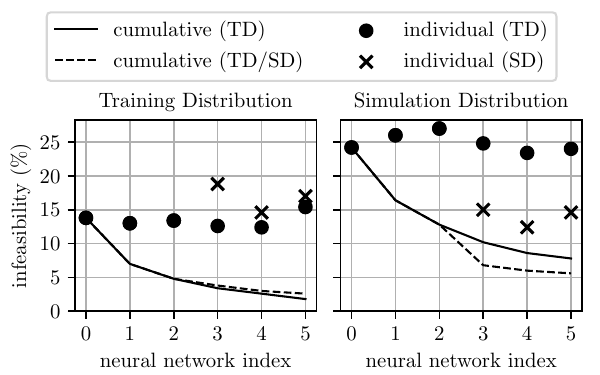}
		\caption{Individual and cumulative \ac{REDS} prediction performance for the \acf{TD} and samples encountered by an \acf{SD}. Two variants of the \ac{REDS} ensemble are compared: one purely trained on the uniform \acf{TD} and one ensemble with three \acp{NN} trained on the \ac{SD}.}
		\label{fig:nn_distribution_shift}
	\end{center}
\end{figure}

\subsection{Closed-loop Results with \texttt{SUMO} Simulator}
We compare the closed-loop performance of the \nnmodule{} with varying numbers of \acp{NN} and of the \expert{} for the dense and sparse traffic in scenario \rdnetCol{}, c.f., Fig.~\ref{fig:ensemble_sparse}.
The closed-loop cost is computed by evaluating the objective~\eqref{eq:miqp_cost} for the closed-loop trajectory and is separated into its components.
The computation times are shown for the serial and parallel evaluation of the \acp{NN}. 
Similar to the open-loop evaluation in Fig.~\ref{fig:nn_eval_ensemble}, the closed-loop results in Fig.~\ref{fig:ensemble_sparse} show a considerable performance gain when more \ac{NN}s are added to the ensemble.
Evaluating~$10$ \acp{NN} in parallel within the \nnmodule{} leads to nearly the same closed-loop performance as the \expert{}. Remarkably, a parallel computation achieves a tremendous speed-up of the worst-case computation time of approximately~$100$ times. A serial evaluation of~$10$ \acp{NN} could still be computed around~$25$ times faster for the maximum computation time compared to the \expert{}. All computation times of the \nnmodule{} variations are below the threshold of $200$ms, while the \expert{} computation time exceeds the threshold in more than~$50\%$, taking up to~$4$s for one iteration.
\begin{figure*}
	\begin{center}
		\includegraphics[scale=0.7]{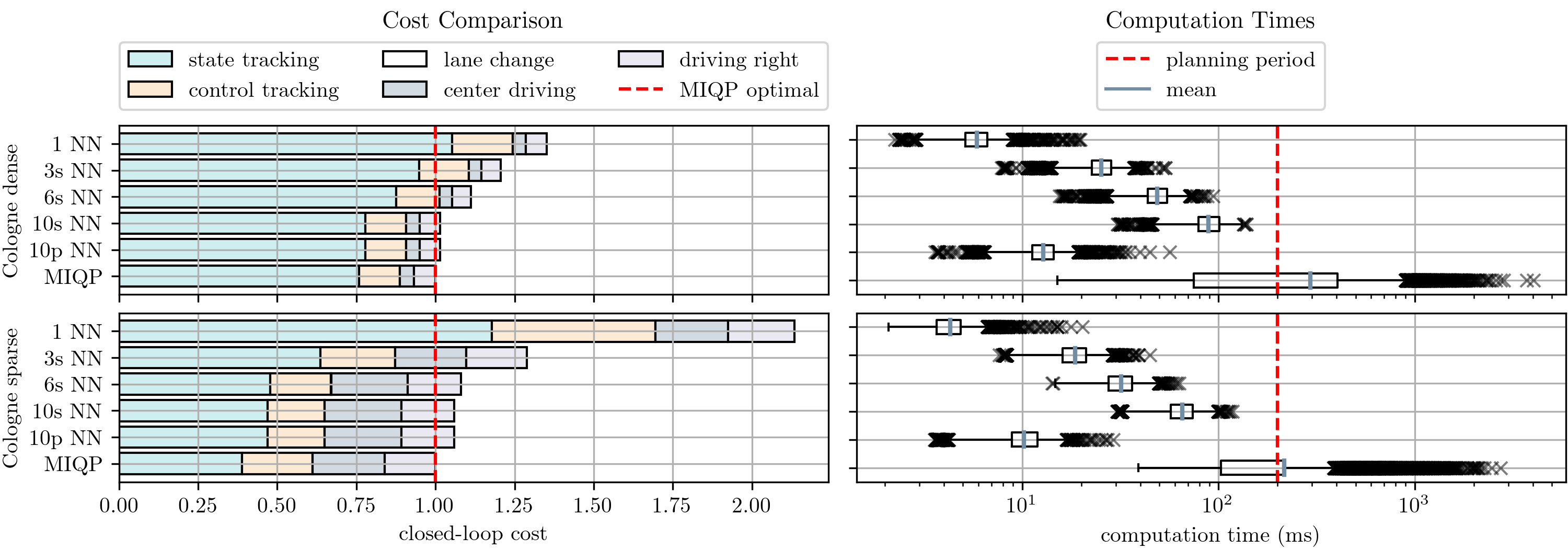}
		\caption{Performance comparison for different numbers of serial (s) and parallel (p) \acp{NN}, for dense and sparse traffic in scenario \rdnetCol{}. The closed-loop cost was computed by evaluating~\eqref{eq:miqp_cost} for the closed-loop ego trajectory and normalized against the closed-loop cost of \expert{}. The state tracking cost, including the lateral position and the desired velocity tracking error, 
  contributes the most to the chosen weights.}
		\label{fig:ensemble_sparse}
	\end{center}
\end{figure*}
Tab.~\ref{table:final_closed_loop} shows further performance metrics. 
By using more \acp{NN} within an ensemble increases the average velocity and decreases the closed-loop cost towards the \expert{} performance.
Using six or ten \acp{NN}, a single situation occurred where a leading vehicle started to change lanes but halfway decided to change back towards the original lane, resulting in a collision, which in a real situation will be avoided by an emergency (braking) maneuver.
Fig.~\ref{fig:demo_nn} shows snapshots of a randomized closed-loop \texttt{SUMO} simulation of a dense and sparse traffic scenario \rdnetCol{}. The \emph{green} colored ego vehicle successfully plans lane changes and overtaking maneuvers to avoid collisions with other vehicles~(blue), using the proposed \nnmodule{} feeding a \mpc{}. 
\begin{figure}
	\begin{center}
		\includegraphics[trim={1cm 0 1cm 0},clip,width=0.3\textwidth]{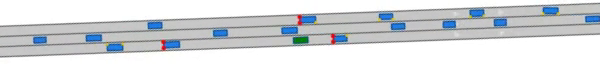}\hspace{5mm}
		\includegraphics[width=0.15\textwidth]{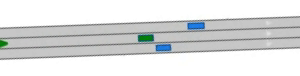}
		\includegraphics[trim={1cm 0 1cm 0},clip,width=0.3\textwidth]{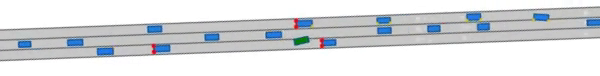}\hspace{5mm}
		\includegraphics[width=0.15\textwidth]{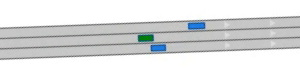}
		\includegraphics[trim={1cm 0 1cm 0},clip,width=0.3\textwidth]{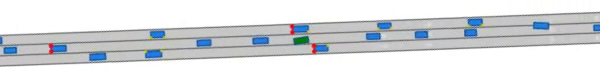}\hspace{5mm}
		\includegraphics[width=0.15\textwidth]{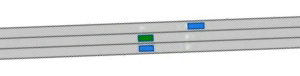}
		\includegraphics[trim={1cm 0 1cm 0},clip,width=0.3\textwidth]{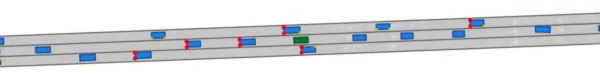}\hspace{5mm}
		\includegraphics[width=0.15\textwidth]{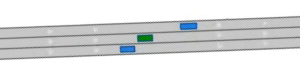}
		\includegraphics[trim={1cm 0 1cm 0},clip,width=0.3\textwidth]{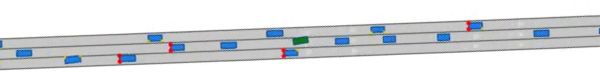}\hspace{5mm}
		\includegraphics[width=0.15\textwidth]{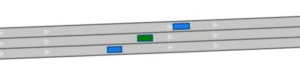}
		\includegraphics[trim={1cm 0 1cm 0},clip,width=0.3\textwidth]{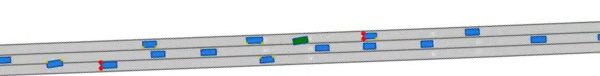}\hspace{5mm}
		\includegraphics[width=0.15\textwidth]{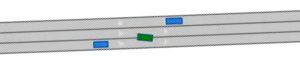}
		\includegraphics[trim={1cm 0 1cm 0},clip,width=0.3\textwidth]{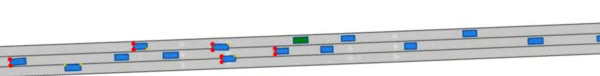}\hspace{5mm}
		\includegraphics[width=0.15\textwidth]{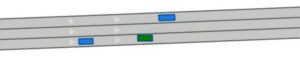}
		\caption{Snapshots of simulated traffic scenario \rdnetCol{} dense (left) and sparse (right) with \texttt{SUMO} and \texttt{CommonRoad}, showing the ego vehicle~(green) and other vehicles~(blue). The \nnmodule{} is real-time feasible and used in combination with \mpc{}, see Fig.~\ref{fig:sw_arch}. Red light at the rear of the vehicle indicate braking.}
		\label{fig:demo_nn}
	\end{center}
\end{figure}

\begin{table}
	\centering
	\ra{1.2}
	\begin{tabular}{@{}lllllll@{}}
		\addlinespace
		\toprule
	\multicolumn{2}{c}{} & \multicolumn{5}{c}{\rdnetCol{}-dense} \\
Property 	 & Unit 	  & MIQP & NN-1 & NN-3 & NN-6 & NN-10\\
\midrule
collisions	&		&	0	&	0	&	0	&	1	&	1\\
vel. mean	&	 $\frac{m}{s}$	&	13.10	&	12.71	&	12.91	&	13.00	&	13.07\\
vel. min	&	 $\frac{m}{s}$	&	0.41	&	1.17	&	0.00	&	0.00	&	0.00\\
lane changes	&		&	457	&	431	&	469	&	471	&	462\\
cost	&	$\frac{1}{s}$	&	49.48	&	66.89	&	59.68	&	55.02	&	50.22\\
		\midrule
		\multicolumn{2}{c}{} & \multicolumn{5}{c}{\rdnetCol{}-sparse} \\
Property 	 & Unit 	  & MIQP & NN-1 & NN-3 & NN-6 & NN-10\\
\midrule
collisions	&		&	0	&	0	&	0	&	0	&	0\\
vel. mean	&	 $\frac{m}{s}$	&	13.94	&	13.69	&	13.83	&	13.87	&	13.89\\
vel. min	&	 $\frac{m}{s}$	&	7.10	&	7.10	&	7.10	&	7.10	&	7.10\\
lane changes	&		&	354	&	337	&	337	&	353	&	353\\
cost	&	$\frac{1}{s}$	&	5.81	&	12.40	&	7.49	&	6.27	&	6.16\\
		\bottomrule
		\vspace{0.5px}
	\end{tabular}
	\caption{Closed-loop evaluation for scenario \rdnetCol{} with dense and sparse traffic.}
	\label{table:final_closed_loop}
\end{table}

\section{Conclusions and Discussion}
\label{sec:conclusion}
We proposed a supervised learning approach for achieving real-time feasibility for mixed-integer motion planning problems. 
Several concepts are introduced to achieve a nearly optimal closed-loop performance when compared to an expert MIQP planner. First, it was shown that inducing structural problem properties such as invariance, equivariance and recurrence into the \ac{NN} architecture improves the prediction performance among other useful properties such as generalization to unseen data. 
Secondly, the \slackqp{} can correct wrong predictions, inevitably linked to the \ac{NN} predictions and are able to evaluate an open-loop cost. This favors a parallel architecture of an ensemble of predictions and \slackqp{} computations to choose the lowest-cost trajectory. In our experiments, adding \acs{NN} to the ensemble improved the performance considerably and monotonously. This leads to the conclusion that \acp{NN} may be added as long as the computation time is below the planning threshold and as long as parallel resources are available. To further promote safety, an \ac{NLP}, i.e., the \safetyfilter{}, is used to plan a collision-free trajectory. The computational burden of the \ac{NLP} is small, compared to the \expert{} since it optimizes the trajectory only locally and therefore, omits the combinatorial variables.

Although we have evaluated the proposed approach for multi-lane traffic, the application to other urban driving scenarios is expected to perform similarly for a similar number of problem parameters due to the following. Many works, e.g.,~\cite{Kessler2020, Kessler2023,Quirynen2023}, use similar MIQP formulations for a variety of \ac{AD} scenarios, including traffic lights, blocked lanes and merging. The formulations mainly differ in the specific environment. Many scenario specifics can be modeled by using obstacles and constraints related to the current lane~\cite{Quirynen2023}, both considered in the presented approach. However, with an increasing number of problem parameters and rare events, the prediction of binary variables may become more challenging.

\bibliographystyle{IEEEtran}
\bibliography{lib}


\begin{IEEEbiography}[{\includegraphics[width=1in,height=1.25in,clip,keepaspectratio]{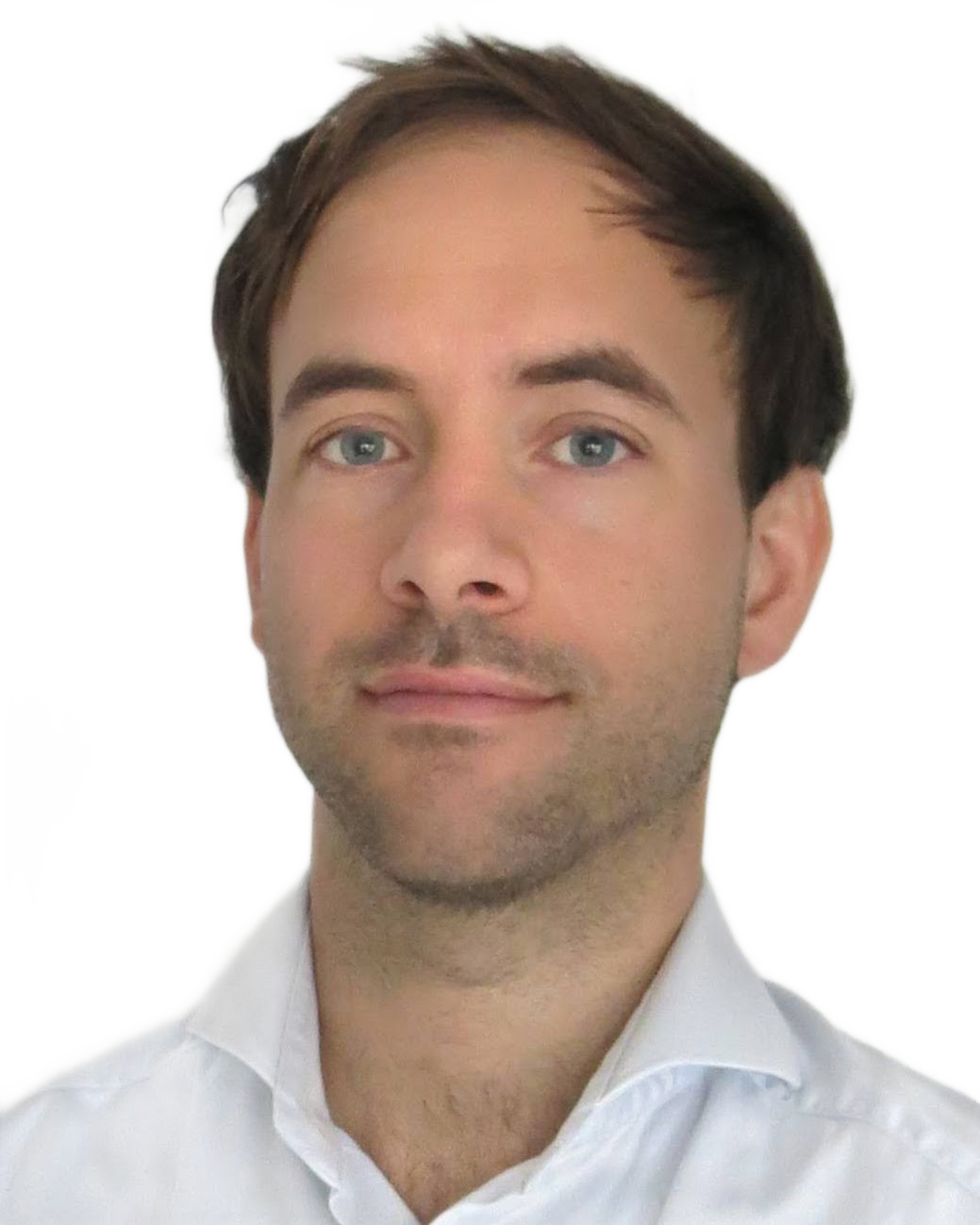}}]{Rudolf Reiter} received his Master's degree in 2016 in electrical engineering with a focus on control systems from the Graz University of Technology, Austria. From 2016 to 2018 he worked as a Control Systems Specialist at the Anton Paar GmbH, Graz, Austria. From 2018 to 2021 he worked as a researcher for the Virtual Vehicle Research Center, in Graz, Austria, where he started his Ph.D. in 2020 under the supervision of Prof. Dr. Mortiz Diehl. Since 2021, he continued his Ph.D. at a Marie-Skłodowska Curie Innovative Training Network position at the University of Freiburg, Germany. His research focus is within the field of learning- and optimization-based motion planning and control for autonomous vehicles and he is an active member of the Autonomous Racing Graz team.

\end{IEEEbiography}
\begin{IEEEbiography}[{\includegraphics[width=1in,height=1.25in,clip,keepaspectratio]{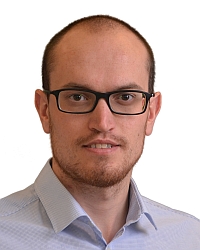}}]{Rien Quirynen} received the Bachelor’s degree in computer science and electrical engineering and the Master’s degree in mathematical engineering from KU Leuven, Belgium. He received a four-year Ph.D. Scholarship from the Research Foundation–Flanders~(FWO) in 2012-2016, and the joint Ph.D. degree from KU Leuven, Belgium and the University of Freiburg, Germany. He worked as a senior research scientist at Mitsubishi Electric Research Laboratories in Cambridge, MA, USA from early 2017 until late 2023. Currently, he is a staff software engineer at Stack AV. His research focuses on numerical optimization algorithms for decision making, motion planning and predictive control of autonomous systems. He has authored/coauthored more than 75 peer-reviewed papers in journals and conference proceedings and 25 patents. Dr. Quirynen serves as an Associate Editor for the Wiley journal of Optimal Control Applications and Methods and for the IEEE CCTA Editorial Board.
\end{IEEEbiography}
\begin{IEEEbiography}[{\includegraphics[width=1in,height=1.25in,clip,keepaspectratio]{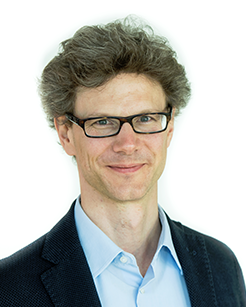}}]{Moritz Diehl}
	studied mathematics and physics at Heidelberg University, Germany and Cambridge University, U.K., and received the Ph.D. degree in optimization and nonlinear model predictive control from the Interdisciplinary Center for Scientific Computing, Heidelberg University, in 2001. From 2006 to 2013, he was a professor with the Department of Electrical Engineering, KU Leuven University Belgium. Since 2013, he is a professor at the University of Freiburg, Germany, where he heads the Systems Control and Optimization Laboratory, Department of Microsystems Engineering (IMTEK), and is also with the Department of Mathematics. His research interests include optimization and control, spanning from numerical method development to applications in different branches of engineering, with a focus on embedded and on renewable energy systems.
\end{IEEEbiography}
\begin{IEEEbiography}[{\includegraphics[width=1in,height=1.25in,clip,keepaspectratio]{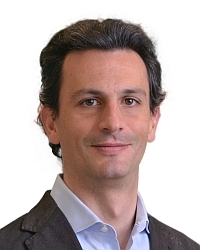}}]{Stefano Di Cairano} 
	received the Master’s (Laurea) and the Ph.D. degrees in information engineering in 2004 and 2008, respectively, from the University of Siena, Italy. 
	During 2008-2011, he was with Powertrain Control R\&A, Ford Research and Advanced Engineering, Dearborn, MI, USA. Since 2011, he is with Mitsubishi Electric Research Laboratories, Cambridge, MA, USA, where he is currently a Deputy Director, and a Distinguished Research Scientist. His research focuses on optimization-based control and decision-making strategies for complex mechatronic systems, in automotive, factory automation, transportation systems, and aerospace. His research interests include model predictive control, constrained control, path planning, hybrid systems, optimization, and particle filtering. He has authored/coauthored more than 200 peer-reviewed papers in journals and conference proceedings and 70 patents.
	Dr. Di Cairano was the Chair of the IEEE CSS Technical Committee on Automotive Controls and of the IEEE CSS Standing Committee on Standards. He is the inaugural Chair of the IEEE CCTA Editorial Board and was an Associate Editor of the IEEE \sc{Transactions on Control Systems Technology}. 
\end{IEEEbiography}

\appendix
\label{sec:appendix}
In the following, we define the most important parameters used in the numerical simulations of this paper. 
For the closed-loop simulations in \texttt{SUMO}, we used the parameter values in Tab.~\ref{table:app_nn_closedloop}. The reference velocity was set higher than the average nominal velocity to cause more overtaking maneuvers.
For the optimization problems, we used the parameters shown in Tab.~\ref{table:app_opti_parameters}, in addition to the values presented in Tab.~\ref{table:overview_opti}.
In Tab.~\ref{table:app_nn_parameters}, neural network hyperparameters are shown for architectures \ac{FF}, \ac{LSTM}, \ac{EDS} and \ac{REDS}. 
The same \ac{LSTM} layer is used for each set element (i.e., for each obstacle in our case) to achieve equivariance in the \ac{REDS} network.

\begin{table}
	\centering
	\ra{1.2}
	\begin{tabular}{@{}lll@{}}
		\addlinespace
		\toprule
		Category & Parameter & Value\\
        \midrule
        General & episode length & $30$s\\
        & nominal road velocity & $13.9\frac{\mathrm{m}}{\mathrm{s}}$\\
        & maximum number of lanes & 3\\
        & lanes width $ d_\mathrm{lane}$& $3.5$m\\
        & vehicle lengths & $5.39$m\\
        & vehicle widths & $2.07$m\\
        & ego desired velocity & $15\frac{\mathrm{m}}{\mathrm{s}}$\\
        & maximum obstacle velocity &  $23\frac{\mathrm{m}}{\mathrm{s}}$\\
        & minimum obstacle velocity &  $0.2\frac{\mathrm{m}}{\mathrm{s}}$\\
        \midrule
        Dense scenario & traffic flow  & $0.56\frac{\mathrm{vehicles}}{\mathrm{lane}\cdot \mathrm{s}}$ \\
        & traffic density & $0.04\frac{\mathrm{vehicles}}{\mathrm{lane}\cdot \mathrm{m}}$ \\
        \midrule
        Sparse scenario & traffic flow  & $0.13\frac{\mathrm{vehicles}}{\mathrm{lane}\cdot \mathrm{s}}$ \\
        & traffic density & $0.01\frac{\mathrm{vehicles}}{\mathrm{lane}\cdot \mathrm{m}}$ \\
    	\midrule
            \safetyfilter{} & diag$(Q)$ & $[1,1,1,1]^\top$\\
            & diag$(R)$ & $[1, 1]^\top$\\
            & slack weight $\weightHard$ & $10^6$\\
      \bottomrule
		\vspace{0.5px}
	\end{tabular}
	\caption{Parameters for closed-loop simulations in \texttt{SUMO}.}
	\label{table:app_nn_closedloop}
\end{table}

\begin{table}
	\centering
	\ra{1.2}
	\begin{tabular}{@{}lll@{}}
		\addlinespace
		\toprule
		Category & Parameter & Value\\
        \midrule
        \expert{} & diag$(Q)$ & $[0,14, 10, 1]^\top$\\
        & diag$(R)$ & $[4, 0.5]^\top$\\
        & lane change weight $w_\mathrm{lc}$ & $3\cdot10^3$\\
        & side preference weight $w_\mathrm{rght}$ & $3$\\
        & safe distances $[\sigma_\mathrm{f},\sigma_\mathrm{b}, \sigma_\mathrm{l}, \sigma_\mathrm{r}]^\top$ & $[0.5,12,0.5,0.5]^\top$m\\
        & minimum controls $\lb{u}$& $[\text{-}10,\text{-}5]^\top\frac{\mathrm{m}}{\mathrm{s}}$\\
        & maximum controls $\ub{u}$& $[3,5]^\top\frac{\mathrm{m}}{\mathrm{s}}$\\
        & velocity ratio constraint $\alpha$& $0.3$\\
		\bottomrule
		\vspace{0.5px}
	\end{tabular}
	\caption{Parameters in \ac{MIQP} formulation~\eqref{eq:MIQP} for \expert{}.}
	\label{table:app_opti_parameters}
\end{table}

\begin{table}
	\centering
	\ra{1.2}
	\begin{tabular}{@{}llll@{}}
		\addlinespace
		\toprule
		Par. & Range & Par. & Range\\
		\midrule
		lat. pos. &$[0, n_\mathrm{lanes}d_\mathrm{lanes}]-\frac{d_\mathrm{lanes}}{2}$ &lanes $ $& [1,3]\\
		obs. lon. pos. & $[\text{-}120,200]\mathrm{m}$&lon. vel. & $[0,30]\frac{\mathrm{m}}{\mathrm{s}}$ \\
		lat. vel. & $[\text{-}1,1]\frac{\mathrm{m}}{\mathrm{s}}$ &obstacles &[1,5]\\
		\bottomrule
		\vspace{0.5px}
	\end{tabular}
	\caption{Ranges of uniform training data distributions.}
	\label{table:sampling distribution_training}
\end{table}

\begin{table}
	\centering
	\ra{1.2}
	\begin{tabular}{@{}lll@{}}
		\addlinespace
		\toprule
		Category & Parameter & Value\\
        \midrule
        General & activation function & ReLU\\
            & batch size & $128$ \\
            & step size& $5\cdot 10^{\text{-}5}$\\
            & epochs & $1500$\\
            & optimizer & \texttt{adam}\\
            & loss function & cross-entropy \\
            & weight decay & $10^{-5}$\\
            & training samples & $10^{5}$\\
            & test samples & $10^{3}$\\
		\midrule
		FF & hidden layers & $7$ \\
           & neurons per layer & $128$ \\
           \midrule
        LSTM & layers & $2$ \\
           & hidden network size & $128$ \\
           & input network layers & $2$ \\
           & output network layers & $2$ \\
           \midrule
        EDS & layers & $7$ \\
           & equivariant hidden size & $64$ \\
           & unstructured hidden size & $64$ \\
           & input network layers & $2$ \\
           & input network hidden size & $128$ \\
           & output network layers & $2$ \\
           & output network hidden size & $128$ \\
           \midrule
        REDS & layers & $7$ \\
           & hidden network sizes & $64$ \\
           & input network layers & $1$ \\
           & output network layers & $1$ \\
           & eq. LSTM output network layers & $1$ \\
           & unstr. LSTM output network layers & $1$ \\
		\bottomrule
		\vspace{0.5px}
	\end{tabular}
	\caption{Hyperparameters for the \ac{NN} architectures.}
	\label{table:app_nn_parameters}
\end{table}

\end{document}